\DeclareMathOperator*{\argmin}{\arg\min}
\newtheorem{theorem}{Theorem}
\algnewcommand{\LeftComment}[1]{\Statex \(\triangleright\) #1}
\definecolor{darkred}{RGB}{139, 0, 0}
\DeclareRobustCommand{\branch}{%
  \begingroup\setlength{\unitlength}{1ex}%
  \begin{picture}(1.2,1)
  \roundcap
  \polyline(1,2)(1,1)(1.9,1)(1,1)(1,0)
  \end{picture}%
  \endgroup
}
\DeclareRobustCommand{\leaf}{%
  \begingroup\setlength{\unitlength}{1ex}%
  \begin{picture}(1.2,1)
  \roundcap
  \polyline(1,2)(1,1)(1.9,1)
  \end{picture}%
  \endgroup
}
\newtheorem*{assumption*}{\assumptionnumber}
\providecommand{\assumptionnumber}{}
\renewcommand\paragraph{\@startsection{paragraph}{4}{\z@}%
	{1.5ex plus .2ex minus .3ex}%
	{-0em}%
	{\normalsize\bf}}
\begin{document}
%
\title{CacheNet: A Model Caching Framework for Deep Learning Inference on the
Edge}
%
%
%
%

\author{Yihao~Fang,~\IEEEmembership{Student Member,~IEEE,}
        Shervin~Manzuri~Shalmani,~\IEEEmembership{Student Member,~IEEE,}
        and~Rong~Zheng,~\IEEEmembership{Senior~Member,~IEEE}
\IEEEcompsocitemizethanks{\IEEEcompsocthanksitem Y. Fang, S. Manzuri Shalmani, and R. Zheng are with the Department of Computing and Software, McMaster University, Hamilton,
ON, Canada.\protect\\
E-mail: \{fangy5,manzuris,rzheng\}@mcmaster.ca. \protect\\
R. Zheng is a visiting professor in Harbin Institute of Technology (Shenzhen), China between 2019 and 2020. 
}
}

\IEEEtitleabstractindextext{%
\begin{abstract}
The success of deep neural networks (DNN) in machine perception applications such as image classification and speech recognition comes at the cost of high computation and storage complexity. Inference of uncompressed large scale DNN models can only run in the cloud with extra communication latency back and forth between cloud and end devices, while compressed DNN models achieve real-time inference on end devices at the price of lower predictive accuracy. In order to have the best of both worlds (latency and accuracy), we propose CacheNet, a model caching framework. CacheNet caches low-complexity models on end devices and high-complexity (or full) models on edge or cloud servers. By exploiting temporal locality in streaming data, high cache hit and consequently shorter latency can be achieved with no or only marginal decrease in prediction accuracy. Experiments on CIFAR-10 and FVG have shown CacheNet is $58-217\%$ faster than baseline approaches that run inference tasks on end devices or edge servers alone.  
\end{abstract}

\begin{IEEEkeywords}
Edge Computing, Deep Learning, Computer Vision, Model Caching
\end{IEEEkeywords}}

\maketitle

\IEEEdisplaynontitleabstractindextext

%
\IEEEpeerreviewmaketitle

\ifCLASSOPTIONcompsoc
\IEEEraisesectionheading{\section{Introduction}\label{sect:cn_introduction}}
\else
\section{Introduction}
\label{sect:cn_introduction}
\fi

\IEEEPARstart{I}{n} recent years, deep neural networks (DNN) have achieved tremendous successes
in  perception applications such as image classification, speech
recognition, target tracking  and machine translation. In many cases, they
outperform human beings in accuracy. However, such high accuracy comes at the
cost of high computation and storage complexity due to large model sizes. For
instance, ResNet-152 contains $152$ layers and over $60$M parameters. Inference
using such large-scale DNN models cannot be accomplished on end devices with
limited computation power and storage in real-time. As a result, many model
compression techniques have been proposed to reduce the size of DNN networks
often at the expense of prediction performance \cite{lin2016fixed, sainath2013low}. Therefore, application
developers face a dilemma to choose between a highly accurate model that can only
run in the cloud with extra communication latency of uploading raw input data
and getting the results back, or local execution of compressed models with
reduced accuracy. 

{\it Is it possible to get the best of both worlds?} In other words, can we
achieve a good trade-off between latency and prediction accuracy? This question
has to some degree been answered by partitioning
approaches~\cite{kang2017neurosurgeon, teerapittayanon2017distributed, fang2019teamnet}. They mainly fall into two paradigms: 1) {\it model partitioning}: concurrent computing among edge nodes and/or end devices~\cite{fang2019teamnet}, which collaboratively performs inference in parallel per a particular sensor input; 2) {\it computation partitioning}: partition between edge and cloud, which take a pre-trained deep model and decide at run-time
based on computation capability of local and cloud compute nodes and
communication overheads where portions of computation should reside~\cite{kang2017neurosurgeon}. The inference time of both paradigms is clearly lower bounded by the smaller (or smallest) of inference times on the
end device and a cloud node (or on all end devices/edge nodes). Furthermore, as per computation partitioning, since DNN models tend to be sequential, the possible ways of partitioning are limited.

In this work, we take a drastically new approach in addressing the trade-off
between latency and prediction accuracy of DNNs. Our approach is motivated by
two observations of perception applications with inputs from
natural scenes or human interactions. First, despite the fact that such applications may need
to handle a large number of input classes over time, the classes of inputs commonly encountered can be much smaller. For instance, an average
English speaking person uses about 4000 words in daily life out of 171,476
words listed in the second edition of Oxford English Dictionary. Secondly,
there exists strong temporal locality in terms of the types of inputs
encountered in a short period of time. This is especially true for vision
processing where rich redundancy exists among consecutive video
frames~\cite{xu2018deepcache, chen2015glimpse, huynh2017deepmon,
mathur2017deepeye}. 

To exploit these two properties, we propose {\it CacheNet}, a model caching
framework for deep learning inference on edge.
CacheNet is inspired by caching in the memory hierarchy. In computer architecture, the memory hierarchy separates computer storage (e.g., register, cache, random access memory, etc.) based on response time \cite{mutlu2015research}. Caching increases data
retrieval performance (e.g. faster response time) by reusing previously retrieved and computed data in
the storage. Analogous to the memory hierarchy, end devices
are closer to data sources and thus have faster response time but lower storage
capacity; while an edge server has more storage capacity but relatively longer network latency.
However, unlike the memory hierarchy that only stores data, CacheNet stores DNN
models. To mitigate the limited computation power on end devices, only
down-sized models with high confidence in the current input data are
stored. Thanks to the temporal locality and the small number of frequently
observed classes, the cached model only needs to be replaced infrequently.

In short, CacheNet combines model partitioning with caching. Instead of
training a single large-scale model, CacheNet generates multiple small
submodels each capturing a partition of the knowledge represented by the large
model. In the proposed architecture, the end device is responsible for selecting a locally cached model and performing the inference; whereas the edge server stores the baseline model and submodels, and is responsible to handle ``cache misses'' when there are sufficient changes in input data. CacheNet is agnostic to the architecture of a baseline deep model. Both the number of submodels and the baseline deep model can be specified by users.

We have implemented CacheNet in TensorFlow, TensorFlow Lite and NCNN. Here, TensorFlow is a high-performance framework for neural network training, while TensorFlow Lite and NCNN are lightweight inference framework optimized for edge computing. CacheNet has been evaluated on a variety of end devices and two different datasets (CIFAR-10 \cite{krizhevsky2009learning} and FVG \cite{gait-recognition-via-disentangled-representation-learning}). We found that CacheNet outperforms end-device-only and edge-server-only approaches in inference time without compromising inference accuracy. For CIFAR-10, CacheNet is $2.2$ times faster than the end-device-only approach and $58\%$ faster than edge-server-only; for FVG, it is $1.5$ times faster than end-device-only and $71\%$ faster than edge-server-only.

The rest of the paper is organized as follows. Section~\ref{sect:cn_related_works}
describes related works to CacheNet from two perspectives: caching and
partitioning. An overview of our approach is given in Section~\ref{sect:cn_system_design} from requirements to system level design. In Section~\ref{sect:cn_training}, we elaborate on aspects of training CacheNet and formalize CacheNet mathematically. Details of inference is
provided in Section~\ref{sect:cn_inferencing} from {\it partition
selection} on the edge server to {\it cache replacement} on end devices. Section~\ref{sect:cn_evaluation} provides evaluations of CacheNet on multiple end devices including Jetson TX2, Jetson Nano, and Raspberry Pi 4. The conclusion and future works are stated in Section~\ref{sect:cn_conclusion}.

\section{Related Works}
\label{sect:cn_related_works}

Existing algorithmic approaches to accelerate machine learning inference on end devices mainly fall into three categories, namely: i) model compression, ii) computation and model partitioning, and iii) reduction of computation in machine learning pipelines.  The three categories of approaches are orthogonal to one another and can be applied jointly.  
Among the three, the latter two are closer to CacheNet and will be discussed in further details in this section.
\subsection{Computation and Model Partitioning}
Partitioning splits a known neural network model into multiple parts to be executed either sequentially or concurrently on the edge and cloud. It can be performed between layers. By trading off between the time offloading computation to the cloud with the time spent in local computation on edge, a shorter latency could be achieved \cite{kang2017neurosurgeon}.  

A more sophisticated computation partitioning was proposed in distributed deep neural networks (DDNNs) \cite{teerapittayanon2017distributed}.
DDNN was designed to perform fast and localized inference using shallow portions of a neural network on end devices. Using an exit point after device inference, an output is classified locally. If the classification cannot be made due to low confidence, the task is escalated to a higher exit point (e.g. the edge exit) in the hierarchy until the last exit (the cloud exit). With multiple exit points, DDNNs can significantly reduce communication costs.

TeamNet~\cite{fang2019teamnet} takes a different approach for computation partitioning. Rather than dividing a pre-trained neural network structurally, it explores knowledge specialization and trains multiple small NNs through competitive and selective learning. During inference, the NNs are executed in parallel on cooperative end devices. 
By decision-level fusion, a master node (either one of the end devices or a edge/cloud node) outputs the final inference results. Since computation partitioning in TeamNet is done at model level, it is also considered a model partitioning approach. 

CacheNet bears similarity with TeamNet in training multiple shallower models to represent the knowledge of a single deep model. However, unlike TeamNet that requires concurrent execution of the shallower models, CacheNet utilizes a ``selector'' to determine the suitable shallow model based on input data. In CacheNet, when a cache hit occurs, the inference is performed on the end device only. The overall inference time is reduced by the indexability of specialized submodels and running the suitable submodel locally most of the time.  

\begin{figure*}[!t]
\centering

\subfloat[Train]{
\includegraphics[width=0.81\linewidth]{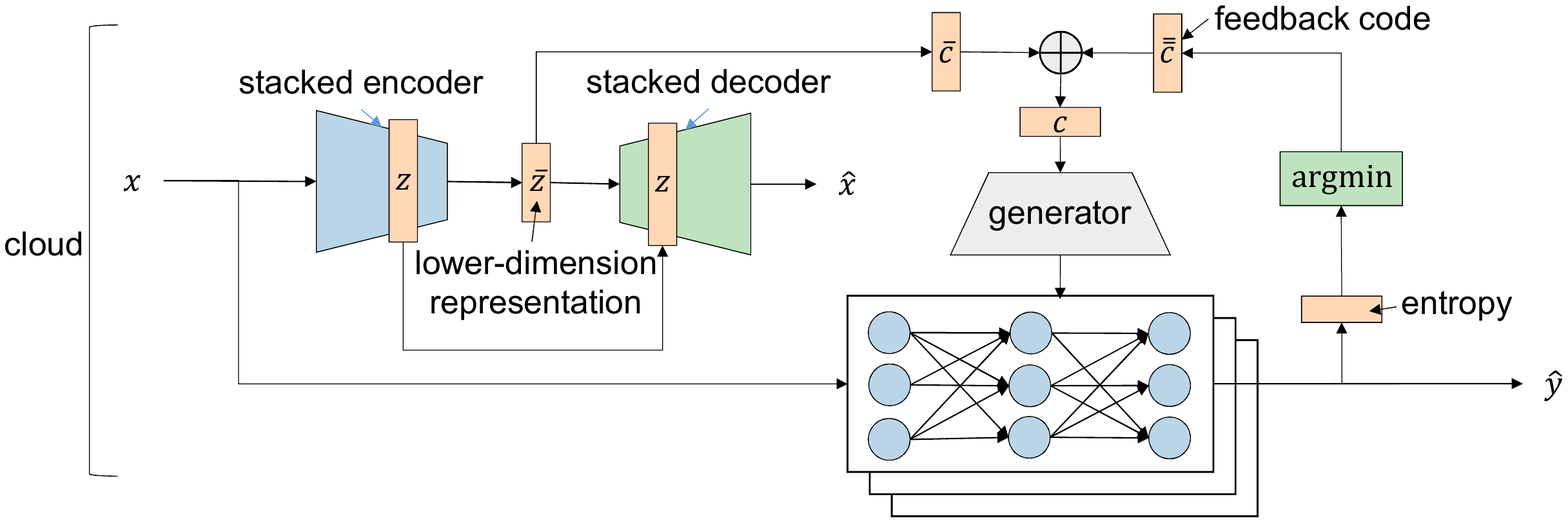}
\label{fig:cn_system_train}
}
\\
\subfloat[Inference]{
\includegraphics[width=0.81\linewidth]{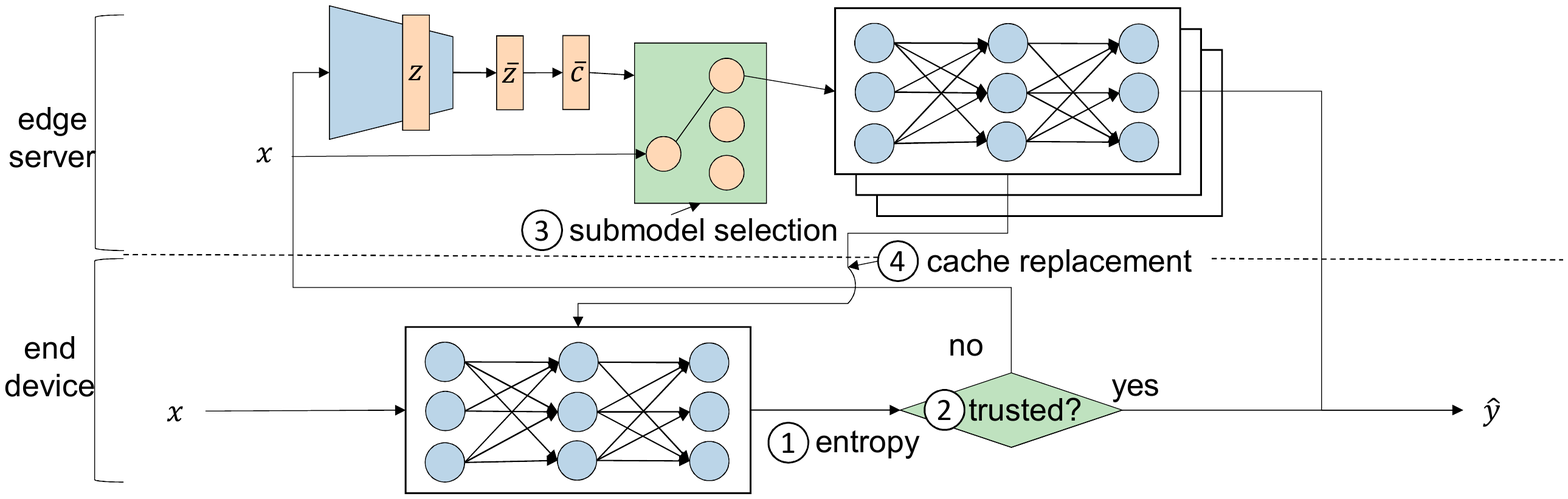}
\label{fig:cn_system_inference}
}
\caption[CacheNet's System Architecture Diagram]{(a) CacheNet first partitions a neural network into multiple smaller specialized neural networks in the cloud. (b) Owing to the temporal locality that exists in the video, the smaller specialized neural network will work well on consecutive frames over a short period. An abrupt change of frame induces higher entropy and triggers cache replacement. }
\label{fig:cn_system}
\end{figure*}

\subsection{Computation Reduction}
Exploiting the existence of the temporal locality in input data, several works reduce DNN inference time by reusing all or part of previous computation results.

Glimpse is a continuous, real-time object recognition system for camera-equipped mobile devices~\cite{chen2015glimpse}. 
In Glimpse, object recognition tasks are executed on local devices when the communication latency between the server and mobile device is higher than a frame-time. In addition to using a reduced model for faster local inference, Glimpse uses an active cache of video frames on the mobile device. A subset of the frames in the active cache is used to track objects on the mobile, using (stale) hints about objects that arrive from the server from time to time.
In \cite{xu2018deepcache}, Xu {\it et al.} proposed DeepCache, a principled cache design for deep learning inference in continuous mobile vision. It breaks down an input video frame into smaller blocks and discovers similar blocks between consecutive frames using diamond search~\cite{xu2018deepcache}. Computation on reusable regions (e.g., feature maps) can thus be cached and propagated through subsequent layers without further processing. 
In \cite{apicharttrisorn2019frugal}, 
to reduce energy drain while maintaining good object tracking precision, the authors develop a software framework called MARLIN. MARLIN only uses a DNN as needed, to detect new objects or recapture objects that significantly change in appearance. It employs lightweight methods in between DNN executions to track the detected objects with high fidelity. Alternatively, we can view MARLIN as reuse the detection and classification results by associating detected objects across multiple frames. In \cite{guo2018foggycache}, Guo et al. proposed FoggyCache for cross-device approximate computation reuse. FoggyCache reuses previously computed outputs by harnessing the ``equivalence'' between different input values. Content lookup and high quality reuse are achieved by the adoption of  adaptive locality sensitive hashing (A-LSH) and homogenized k-nearest neighbors (H-kNN). Harnessing reuse opportunities translates to reduced computation latency and energy consumption. 

All afore-mentioned approaches are orthogonal to CacheNet. In DeepCache and MARLIN, a full-fledged deep model is still needed on an end device and thus the worst-case execution time is not reduced. This is in contrast with CacheNet, which only runs reduced submodels locally.

\section{System Design}
\label{sect:cn_system_design}

CacheNet is a distributed inference framework on edge. Its training phase happens in the cloud and the inference is a collaboration between the edge server and the end device. The intuition behind CacheNet is dividing a neural network's knowledge into multiple specialized partitions (neural networks). These specialized partitions are generally a few times smaller than the original neural network, and only the specialized partition is transferred to the end device for inference. From the end device's perspective, it caches only a times smaller and specialized partition of the knowledge, and thus its inference is times faster than the original ones.

The challenges of partitioning are two folds: 1) each partition must be sufficiently specialized and the combination (collaboration) of all partitions must behave roughly equivalently to the original neural network; 2) There must be a selector that picks the right partition given a specific hint at a time. The first challenge was mostly solved by TeamNet \cite{fang2019teamnet}, while the second one has not been solved by any approaches at this point. 

In order to solve the second challenge, it is necessary to formalize the hint as a specific representation. Inspired by coding theory, a code vector is a good representation as long as the mutual information between the code vector and the input image is maximized at the training phase. Although we have the hint representation, it is still difficult to associate the representation with a specific portion of the knowledge. To do so, we introduce a generator that generates the neural network's parameters accordingly to the given code representation. 

Thus, in training (Figure~\ref{fig:cn_system_train}), we need to 1) maximize the mutual information between the code representation and the input image; 2) better associate the code representation with the specialized neural network partition; 3) train each partition with respect to their output entropy, which has been demonstrated practical in TeamNet \cite{fang2019teamnet}. CacheNet's system design is therefore conducted simultaneously with respect to the above objectives. 

During inference (Figure~\ref{fig:cn_system_inference}), CacheNet should infer the code representation from a particular input image, and then the code representation will be used as a hint to tell which specialized partition to be cached on the end device. Without the need to transfer the input frames to the edge server every time, inference latency can be shortened. The accuracy is generally not sacrificed, because there exists a temporal locality on consecutive frames most of the time. As long as there is not an abrupt change of the scene, a specialized partition should work well; otherwise (e.g., in regard to edited clips from multiple cameras, or a fast-moving object/camera \cite{kwon2008tracking}), a cache replacement should be triggered, considering a partition only holds a subset of the knowledge. 

\section{Training CacheNet}
\label{sect:cn_training} 

As illustrated in Figure~\ref{fig:cn_system_train}, to train CacheNet
submodels, we need to first divide the input data into partitions\footnote{The
partitions are overlapping as will be discussed in Section~\ref{subsect:cn_indexability_of_low_dimension_representation}.}. The index
associated with a partition is taken as an input to a neural network generator
to produce the corresponding submodel for the partition. The encoder that maps
input data to partition indices and the submodels will be optimized jointly.
Next, we discuss the steps in detail. 
\subsection{Stacked Information Maximizing Variational Autoencoder (S-InfoVAE)}
The purpose of this step is to map input data into a low dimension space for
further partitioning. The low-dimension representation should
preserve the proximity among data and allow ``reconstruction'' of the orignal
data.  

Variational Bayesian autoencoder was proposed by Kingma and
Welling~\cite{kingma2013auto}. The basic idea is to find a lower-dimension
latent variable underlying the corresponding input distribution. Let $z$
denote the latent variable and $x$ represent the input variable.  Consider a dataset $D = \{X, Y\}$, where X is drawn independently from an input probability distribution $p_D (x)$.  Suppose that $p_\xi(z)$ (the prior distribution of $z$) and the
conditional probability distribution $p_\xi (x|z)$ are both parameterized by a
neural network with parameters $\xi$. One can find the
optimal parameters $\xi$ by maximizing the log-likelihood as:  

\begin{equation}
\mathbb{E}_{p_D(x)}\left[\log p_\xi(x) \right] = \mathbb{E}_{p_D(x)} \left[\log \mathbb{E}_{p_\xi (z)} \left[p_\xi(x|z) \right] \right].
\end{equation}

However, the integral of the marginal likelihood $p_\xi (x)$ is generally
intractable even for a moderately complex neural network with a single non-linear
hidden layer. A possible approach \cite{kingma2013auto} is to rewrite $\log
p_\xi(x)$: 
\begin{equation}
\log p_\xi (x) = D_{KL} (q_\psi (z|x)||p_\xi (z|x)) + \mathcal{L}(\xi, \psi; x),
\end{equation}
where 
\begin{equation}
\mathcal{L}(\xi, \psi; x) = - D_{KL} (q_\psi (z|x)||p_\xi (z)) + E_{q_\psi (z|x)} \log p_\xi (x|z).
\end{equation}

Since Kullback-Leibler divergence is always non-negative,
$\mathcal{L}(\xi, \psi; x)$ is a lower bound of $\log p_\xi (x)$, namely, 

\begin{equation}
\mathcal{L}(\xi, \psi; x) \leq \log p_\xi (x).
\end{equation}

By maximizing the lower bound $\mathcal{L}(\xi, \psi; x)$, the log likelihood
$\log p_\xi (x)$ is maximized as well. However, since the latent variable $z$
is of lower dimension than the input variable $x$, any optimization against $x$
may be magnified compared to $z$. To counteract the imbalance problem, Zhao et al.~\cite{zhao2019infovae} propose to put more weight on $z$. Let
$\mathcal{L}(\xi, \psi) $ be the expectation of $\mathcal{L}(\xi, \psi; x)$
with respect to the input distribution $p_D (x)$. We then have, 

\begin{equation}
\begin{split}
\mathcal{L}(\xi, \psi)  = & E_{p_D (x)} \mathcal{L}(\xi, \psi; x) \\
 = & - D_{KL} (q_\psi (x, z) || p_\xi (x, z)) \\
 = & - D_{KL} \left( q_\psi (z) || p_\xi (z) \right) \\
   & \quad - \mathbb{E}_{p_\xi (z)} \left[ D_{KL} \left( q_\psi (x|z) || p_\xi (x|z) \right) \right].
\end{split}
\end{equation}

To put more weights on $z$, one needs to add i) a scaling parameter to the
Kullback-Leibler divergence between $q_\psi (z)$ and $p_\xi (z)$, and ii) a
term of mutual information between $x$ and $z$ \cite{zhao2019infovae}:

\begin{equation}
\begin{split}
\mathcal{L}^*(\xi, \psi) = & - \lambda D_{KL} \left( q_\psi (z) || p_\xi (z) \right) \\
& \quad - \mathbb{E}_{p_\xi (z)} \left[ D_{KL} \left( q_\psi (x|z) || p_\xi (x|z) \right) \right] \\
& \quad + \alpha I_{q_\psi (x, z)} (x;z).
\end{split}
\end{equation}

In practice, $\mathcal{L}^*(\xi, \psi)$ can be rewritten into \eqref{eq:L_star} for more effective optimization \cite{zhao2019infovae}: 

\begin{equation} \label{eq:L_star}
\begin{split}
\mathcal{L}^*(\xi, \psi) = & \mathbb{E}_{p_D (x)} \mathbb{E}_{q_\psi (z|x)} \left[ \log p_\xi (x|z) \right] \\ 
& \quad - (1 - \alpha) \mathbb{E}_{p_D (x)} D_{KL} (q_\psi (z|x)||p_\xi (z)) \\
& \quad - (\alpha + \lambda - 1) D_{MMD} (q_\psi (z) || p_\xi (z)),
\end{split}    
\end{equation}
where $D_{MMD} (q_\psi (z) || p_\xi (z)) $ is the maximum-mean discrepancy between $q_\psi (z)$ and $p_\xi (z)$. 

Experiments show that when the latent variable $z$ is of far lower dimension
than the input variable $x$, the lower bound $\mathcal{L}^*(\xi, \psi)$ can not properly converge. To deal with this problem, we propose the S-InfoVAE by keeping $z$ at a relative high dimension and introducing a second latent
variable $\bar{z}$ of dimension two. The corresponding parameters (or
equivalently the neural networks) of the two latency variables are stage-wisely optimized. Formally, the second optimization objective is defined as follows:

\begin{equation}
\mathcal{\bar{L}}^*(\bar{\xi}, \bar{\psi}) = E_{p_{\bar{\psi}} (z)} \mathcal{L}(\bar{\xi}, \bar{\psi}; z)
\end{equation}

\subsection{Indexability of Low-dimension Representation}
\label{subsect:cn_indexability_of_low_dimension_representation}

To divide data into overlapping partitions, sophisticated indexes are needed. 
Let $K$ be the total number of submodels, an input parameter of CacheNet. Each
input sample in $D$ is associated with one or more indices chosen from $1$ to $K$
and will be used to train the corresponding submodel(s). By allowing
multiple indices per data sample or equivalently shared training data, we
facilitate knowledge sharing across submodels.  In this step, we determine
the indices of input data solely based on the low-dimension representations from the S-InfoVAE. In subsequent sections, we will also incorporate feedback from the
resulting submodels in the form of uncertainty.

Recall that $\bar{z}$'s are 2D vectors. To calculate the angular distance
between the vector $\bar{z} = [\bar{z}_1 \bar{z}_2]$ and the x-axis, the
$\arctan$ trigonometric function is applied:

\begin{equation}
\theta =
\begin{dcases} 
      \arctan \frac{\bar{z}_2}{\bar{z}_1} & \bar{z}_1 > 0 \\
      \arctan \frac{\bar{z}_2}{\bar{z}_1} + \pi & \bar{z}_1 < 0, \bar{z}_2 \geq 0 \\
      \arctan \frac{\bar{z}_2}{\bar{z}_1} - \pi & \bar{z}_1 < 0, \bar{z}_2 < 0 \\
      \frac{\pi}{2}  & \bar{z}_1 = 0, \bar{z}_2 > 0 \\
      - \frac{\pi}{2}  & \bar{z}_1 = 0, \bar{z}_2 < 0 \\
      0  & \bar{z}_1 = 0, \bar{z}_2 = 0.
   \end{dcases}
\end{equation}

For better convergence, a small noise term $\epsilon$ is added to the $\theta$.
To keep the resulting angles between $0$ and $2\pi$, a modulo
function is applied as follows:

\begin{equation}
    \tilde{\theta} = (\theta + \epsilon) \mod 2 \pi.
\end{equation}

For $K$ partitions where each partition roughly occupies a region of $\frac{2
\pi}{K}$, the midpoint of the $k^{th}$ partition is given by $\frac{2 \pi
\left( k-\frac{1}{2} \right)}{K} $, for $k = 1, \ldots, K$. Let $\zeta$ be a
vector of all such midpoints, namely:

\begin{equation} \label{eq:zeta}
\mathbb{\zeta} = [\zeta_1 \ldots \zeta_K] \text{,} \quad
\zeta_k = \frac{2 \pi \left( k-\frac{1}{2} \right)}{K}.
\end{equation}

We wish to assign input samples to partitions based on their closeness to the
$K$ midpoints in polar coordinates. One straightforward approach is via a
1-nearest neighbor search, namely, finding $k$ that minimizes $\min \left(|\tilde{\theta} - \zeta_k|, 2\pi - |\tilde{\theta} - \zeta_k|\right)$. Doing so will result in a one-hot vector with one for the $k$th
  element and zeros for all other elements. Instead, we choose to define a {\it soft} code $\bar{c}$ as, 
\begin{equation} \label{eq:c_bar}
\bar{c} = \sum_{n=-1}^{n=1} \exp\left(-\frac{(\zeta-\tilde{\theta}+2\pi n)^2}{2 \sigma^2}\right),
\end{equation}
where $\sigma$ is a parameter that controls the speed of decay as $\tilde{\theta}$ deviates from the midpoints. 
Clearly, each element of $\bar{c}$ is between 0 and 1, and the maximum value occurs at $k = \argmin_k \left( \min \left(|\tilde{\theta} - \zeta_k|, 2\pi - |\tilde{\theta} - \zeta_k| \right) \right)$.

With the soft code $\bar{c}$ of some input $x$ and a threshold $\tau$, we can determine which partition(s) it belongs to as $\{k|\bar{c}_k \ge \tau\}$. Plugging \eqref{eq:zeta} and \eqref{eq:c_bar}, we have $c_k \ge \tau$ if
the following condition holds,
\newcommand{\half}{\frac{1}{2}}
$$
\frac{2\pi\left(k-\half\right)}{K} - \sigma\sqrt{-2\log\tau} \le \tilde{\theta} \le
\frac{2\pi\left(k-\half\right)}{K} + \sigma\sqrt{-2\log\tau}.
$$

In other words, we can view mapping to soft codes along with a suitable
choice of $\tau$ and $\sigma$, having the effect of dividing the polar coordinate space into $K$ overlapping sectors with width $2\sigma\sqrt{-2\log\tau}$. An example of four partitions is given in Figure~\ref{fig:cn_angle2code}.  When $\bar{z}$ of an input $x$ falls into the overlapping area of sectors $i$ and $j$, we view it as contributing to the training of submodel $i$ and $j$. Let $\gamma$ be the overlapping ratio (normalized by $2\pi$). $\sigma$ can thus be determined by,
\begin{equation} \label{eq:sigma}
    \sigma = \sqrt{- \frac{ \pi^2 (1 + \gamma)^2}{2 K^2 \log \tau}}
\end{equation}

In Figure~\ref{fig:cn_angle2code}, $\gamma$ is set to $30\%$ and $\tau$ equals to $0.3$. When $\tilde{\theta}$ equals to $\frac{1}{3} \pi$, which is outside of
the overlapping region (Figure~\ref{fig:cn_angle2code_1}), the data point only
contributes to the training of one submodel. When $\tilde{\theta}$ equals to
$\frac{4}{9} \pi$, which is in between two midpoints $\frac{1}{4} \pi$ and
$\frac{3}{4} \pi$, the data point contributes to the training of the two
corresponding submodels. 

\begin{figure}[ht] 
\centering
\subfloat[Vector not in any overlap region ($\tau$ equals to $0.3$)]{
\includegraphics[width=0.45\linewidth]{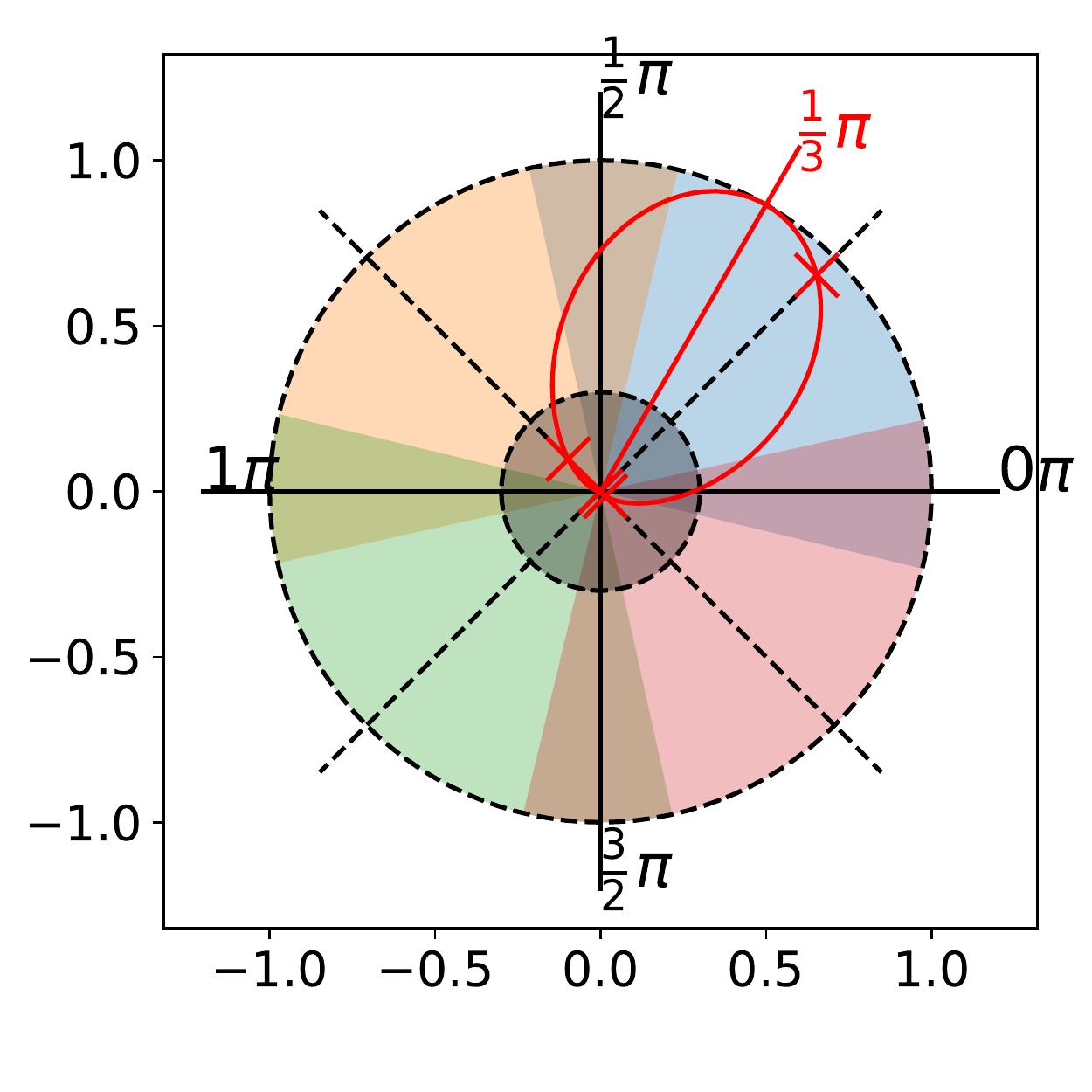}
\label{fig:cn_angle2code_1}
}\hspace{0.02\linewidth}
\subfloat[Vector in an overlap region ($\tau$ equals to $0.3$)]{
\includegraphics[width=0.45\linewidth]{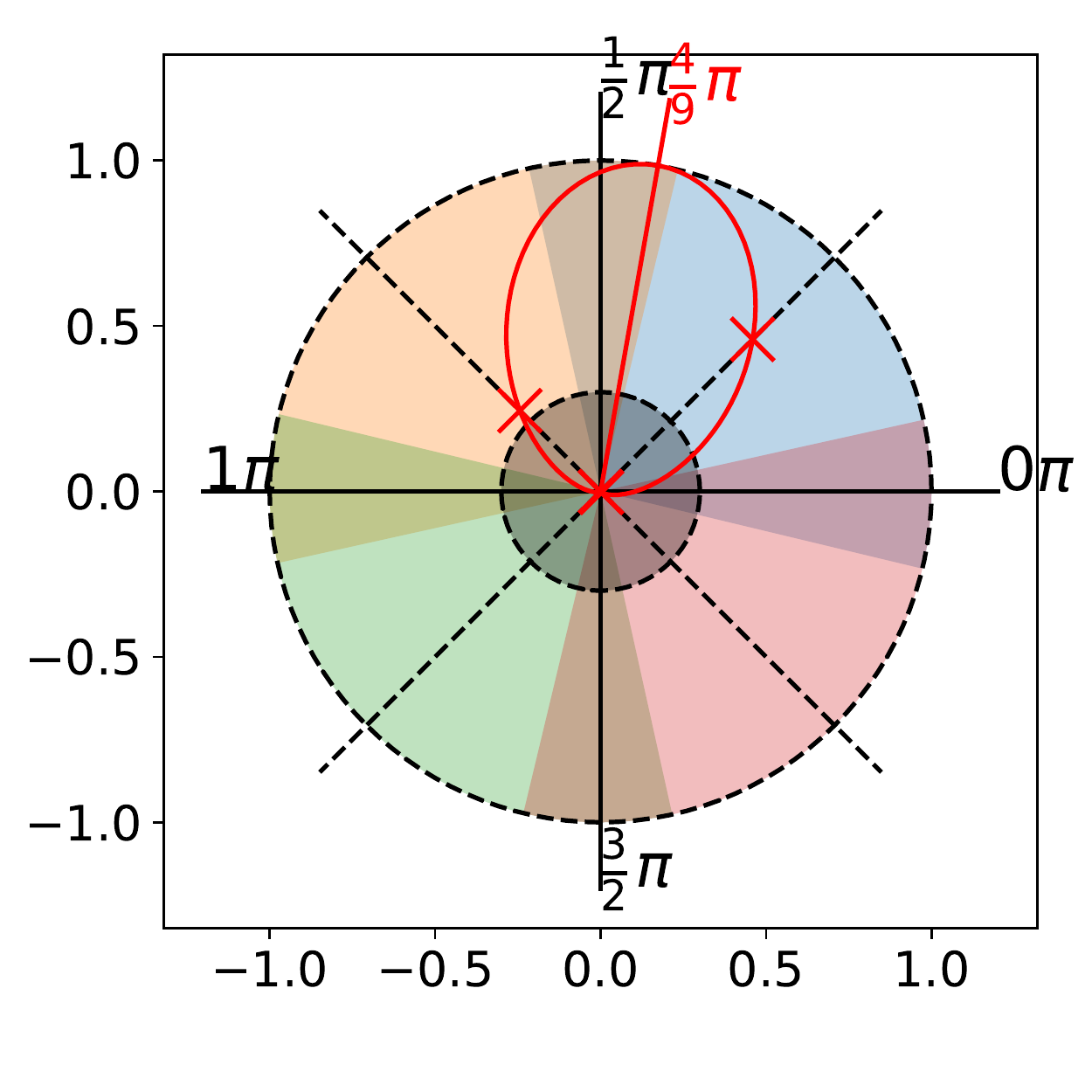}
\label{fig:cn_angle2code_2}
}\\
\subfloat[Vector not in any overlap region ($\tau$ equals to $0.1$)]{
\includegraphics[width=0.45\linewidth]{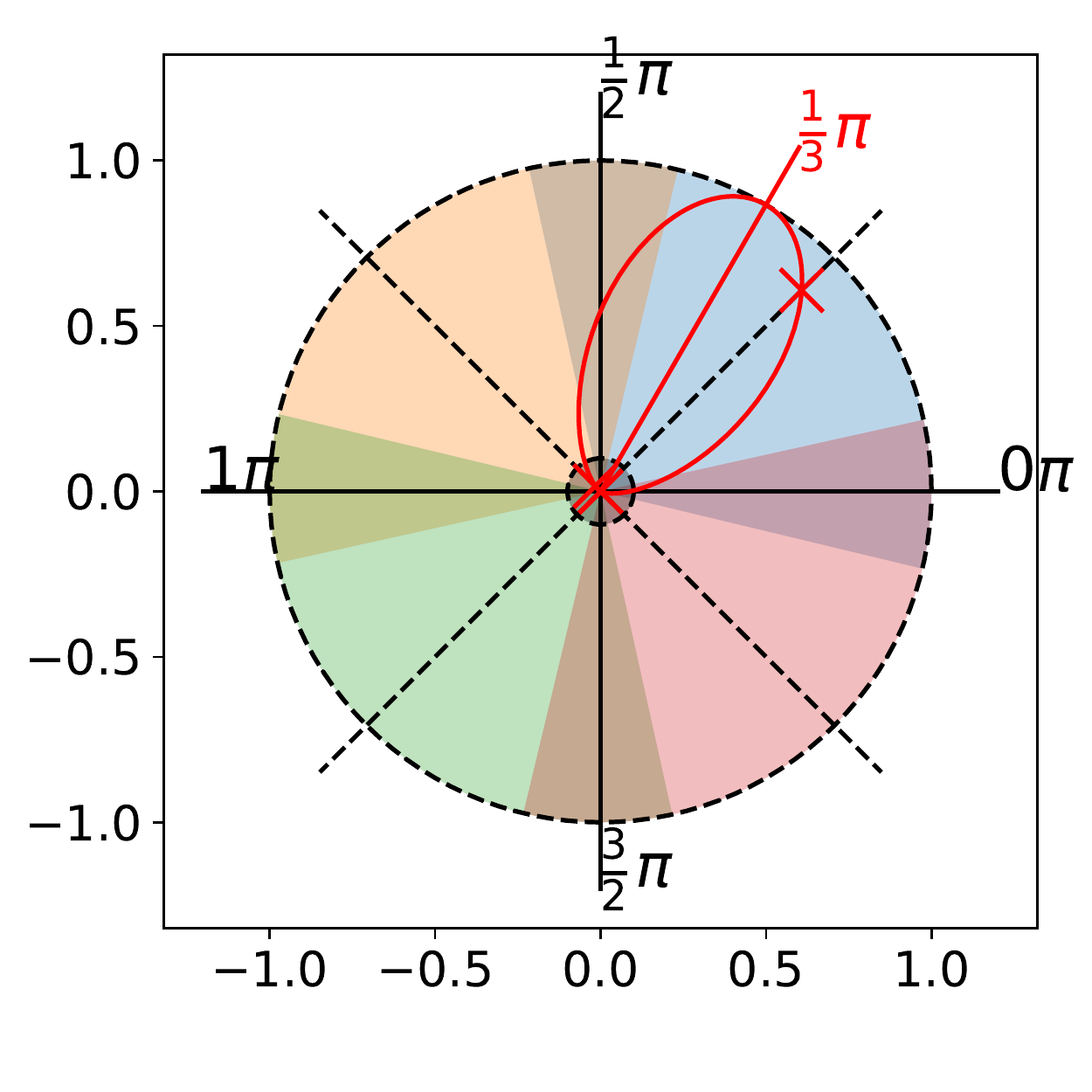}
\label{fig:cn_angle2code_3}
}\hspace{0.02\linewidth}
\subfloat[Vector in an overlap region ($\tau$ equals to $0.1$)]{
\includegraphics[width=0.45\linewidth]{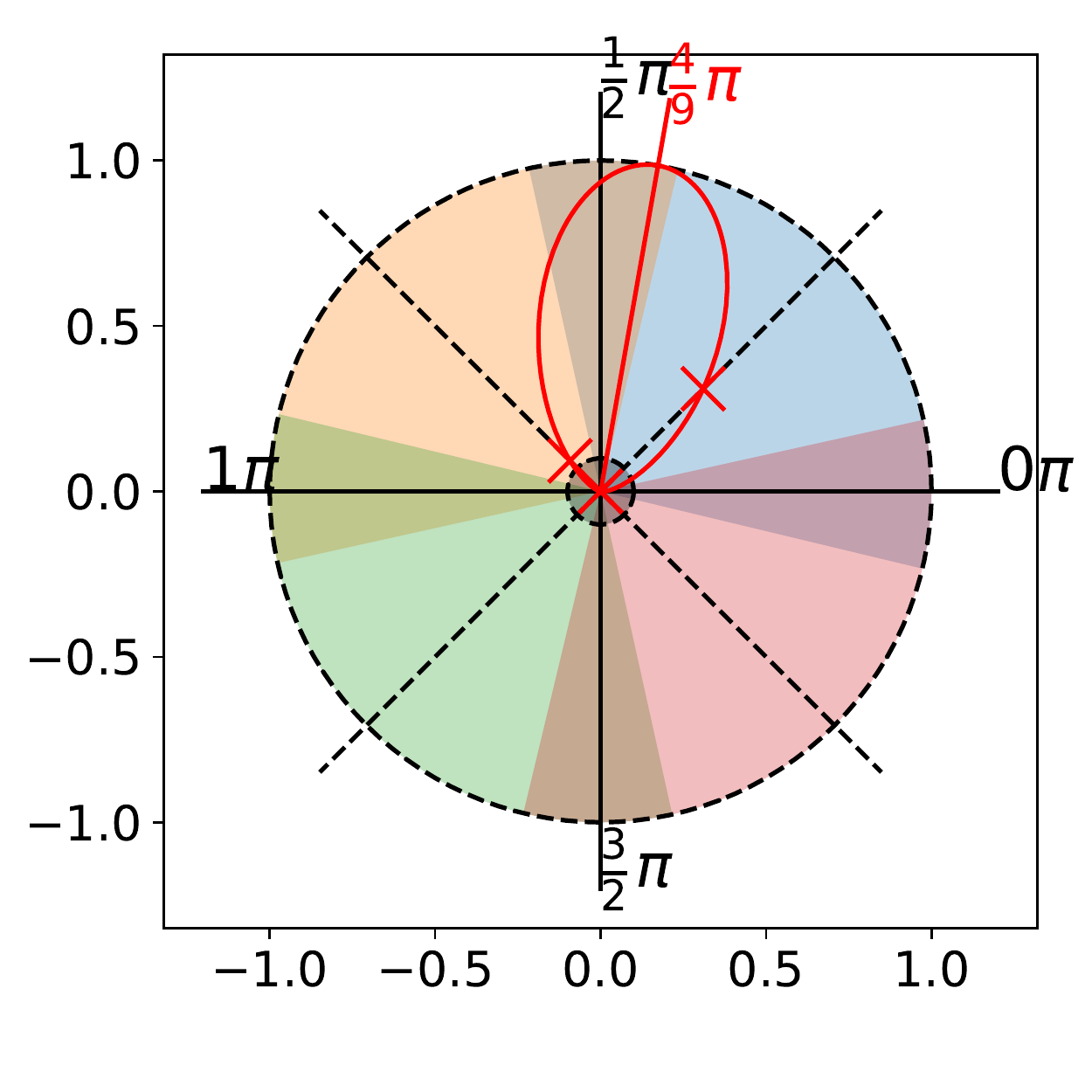}
\label{fig:cn_angle2code_4}
}
\caption[Illustration of Partitioning in the Stacked Autoencoder]{The red straight line denotes the angle $\tilde{\theta}$, with the red curve indicating the amount of decay from the maximum $1$ to the minimum $0$ while moving away from $\tilde{\theta}$. The red cross maker demonstrates a value on the midpoint, with in the brighter area telling it is above the selection threshold while in the darker area telling below the selection threshold. } \label{fig:cn_angle2code}
\end{figure}

\subsection{Consideration of Model Uncertainty}
The soft code $\bar{c}$ utilizes the angular proximity of input data in a 2D representation. However, partitioning based on the soft code alone does not always imply the trained model is more specialized. The predictive uncertainty of a trained model with respect to the input data is also indicative of how much the model has ``specialized'' on the data. Intuitively, if a model is specialized on one partition of the input space, it should have a lower predictive uncertainty on the prediction of the data in the partition, but higher uncertainty on other data. In \cite{fang2019teamnet}, we found that the
entropy computed from the softmax output of a neural network model is a good
surrogate for the uncertainty of the model on the data.  Formally, we denote
$H(\hat{y}_k|x, \phi_k)$ the entropy of the $k^{th}$ submodel parameterized by
$\phi_k$ with respect to the input $x$,
\begin{equation}
H(\hat{y}|x, \phi_k) = - \sum_c p(\hat{y}=c|x, \phi_k) \log p(\hat{y}=c|x,
\phi_k),
\end{equation}
where $p(\hat{y}=c|x, \phi_k)$ is the predictive probability of output $c =
1,2,...,C$ for input $x$ from submodel $k$. 

To encourage the assignment of $x$ to a submodel that has the lowest predictive
uncertainty, we introduce a $K$-dimension vector $\bar{\bar{c}}$ as follows:   

\begin{equation}
\bar{\bar{c}} = [\bar{\bar{c}}_1 \ldots \bar{\bar{c}}_K], \quad \bar{\bar{c}}_i=
\begin{dcases}
\tau & i = \argmin_k H(\hat{y}_k|x, \phi_k) \\
0 & otherwise. \\
\end{dcases}
\end{equation}

Clearly, $\bar{\bar{c}}$ is a one-hot vector scaled by $\tau$. 
\subsection{Partition of Input Data}
To this end, we have obtained two $K$-dimension codes $\bar{c}$ and $\bar{\bar{c}}$ for each input data $x$. To decide the final partition of input data, we should take both
into account. This can done by a simple linear combination: 
\begin{equation}
c = \alpha\bar{c} + (1-\alpha)\bar{\bar{c}}.
\end{equation}
In the experiments, we set $\alpha = \half$.  

Let $\mathcal{P}(x) = \left\{k| c_k \ge \frac{\tau}{2}\right\}$ denote the
indices of partitions (submodels) that input $x$ contributes to. Clearly,
$\mathcal{P}(x)$ cannot be an empty set since its respective $\bar{\bar{c}}$
contains one element that equals to $\tau$. In the case that the cardinality of
$\mathcal{P}(x)$ is greater than one, this implies that $x$ will be used to
train multiple submodels. 
\subsection{Neural Network Generator}
The architecture of the generator network is illustrated in
Figure~\ref{fig:cn_generator}. A neural network generator $G$ takes an element $k$ in $\mathcal{P}(x)$ (being converted to a one-hot vector) as input and generates the parameters $\phi_k$ of the $k$th submodel.  CacheNet is agnostic to the target neural network architecture, which is decided by the target application. For example, for image classification, Shake-Shake~\cite{gastaldi2017shake} has been shown to perform well across several datasets. Given $K$, we scale down the target neural network architecture to have reduced capacity. 

Suppose $\hat{y}_k$ is the prediction of the $k$th submodel for $x$, noted by $\hat{y}_k = F(x;\phi_k)$. To avoid overfitting, we allow parameter sharing across the submodels. The proportion of parameters to be shared, the depth and the width of the shared networks are hyper-parameters to be determined by the neural network
structure of the submodels. For an input data $x$ and its label $y$, we first
compute $\mathcal{P}(x)$.  The cross-entropy loss for classification is given by, \begin{equation} \label{eq:j_f}
J_F(x,y) = \sum_{k\in \mathcal{P}(x)}H(\hat{y}_k,y)
\end{equation}
Minimizing $E_{p_D(x)}J_F(x,y)$ leads to a more accurate prediction with respect to the dataset. 

\begin{figure}[ht]
\centering
\includegraphics[width=0.9\linewidth]{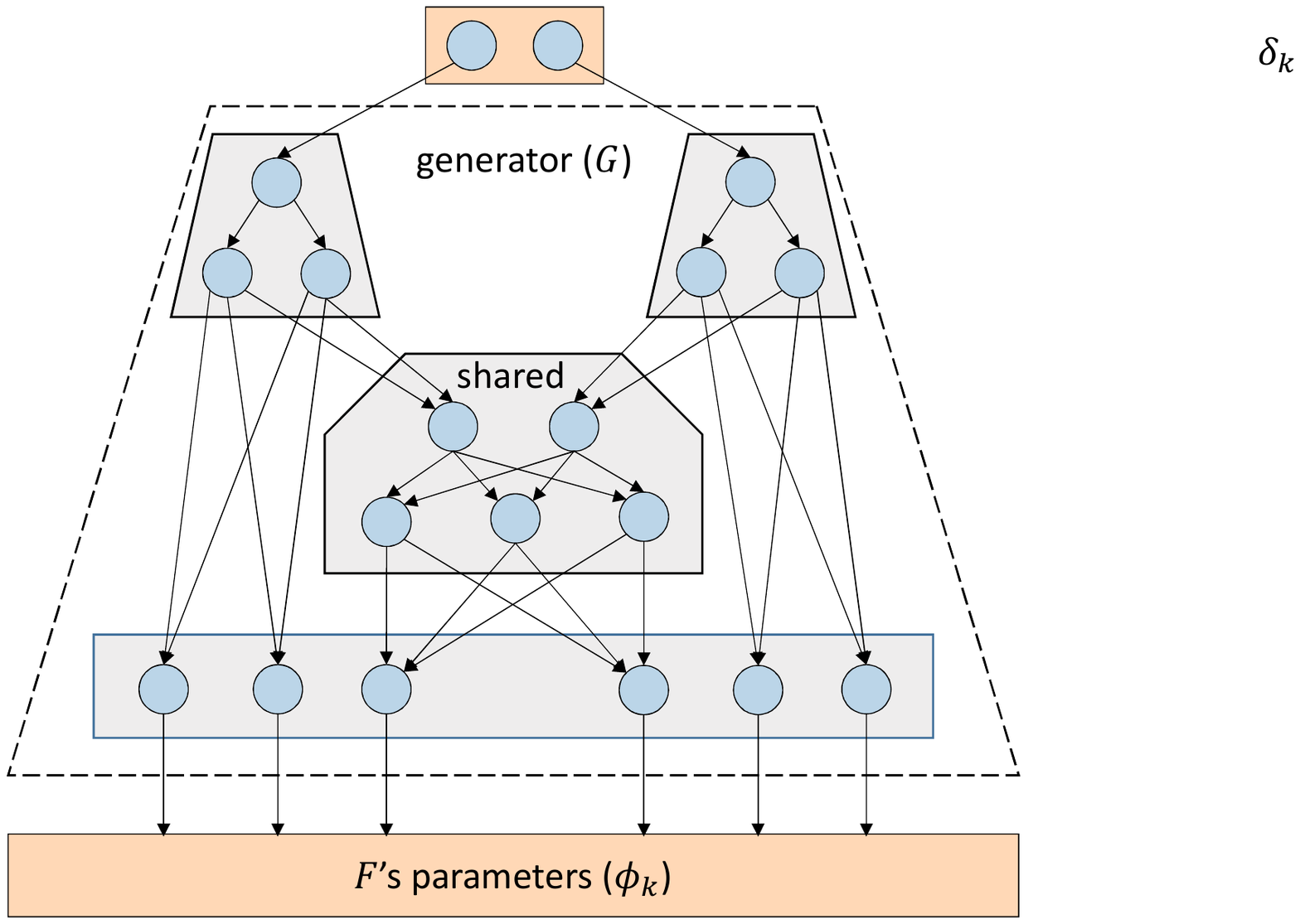}
\caption[Data Flow in CacheNet's Generator]{The generator $G$ takes a one-hot vector $\delta_i$ as input and generates the parameters of the $i^{th}$ partition. Values (either $0$ or $1$) of each dimension in $\delta_i$ are used to deactivate or activate a corresponding branch.}
\label{fig:cn_generator}
\end{figure}
\subsection{Training Algorithm}
In CachNet, there are three networks that need to be trained, namely, the
stacked encoder, the stacked decoder and the generator network. Since the
output of the stacked encoder contributes to the input of the generator
network, they need to be trained jointly. 

The lower-dimension representation $\bar{z}$ is the most informative of a particular input $x$ if two lower bounds $\mathcal{L}^*(\xi, \psi)$ and
$\mathcal{\bar{L}}^*(\bar{\xi}, \bar{\psi})$ are maximized, and a submodel's predictions are the most accurate if $E_{p_D(x)}J_F(x,y)$ is minimized. Thus, the minimization objective $J$ should be $E_{p_D(x)}J_F(x,y)$ added to the negation of $\mathcal{L}^*(\xi, \psi)$ and $\mathcal{\bar{L}}^*(\bar{\xi}, \bar{\psi})$:
\begin{equation} \label{eq:j}
J = E_{p_D(x)}J_F(x,y) - \mathcal{L}^*(\xi, \psi) - \mathcal{\bar{L}}^*(\bar{\xi}, \bar{\psi}).
\end{equation}

To better converge, $E_{p_D(x)}J_F(x,y)$, $\mathcal{L}^*(\xi, \psi)$, and $\mathcal{\bar{L}}^*(\bar{\xi}, \bar{\psi})$ are optimized stage-wisely and batch-wisely. Let $J^{(i)}$ be $J$ with respect to a batch $(X^{(i)}, Y^{(i)})$ drawn from the dataset $D$. Suppose the generator $G$ is parameterized by $\chi$, and $\kappa$ is the set
of $\{\xi, \psi, \bar{\xi}, \bar{\psi}, \chi\}$. The training algorithm should iteratively apply gradient updates to $\kappa$ (or $\chi$) with respect to the loss
function $J^{(i)}$ and descend to a minimum of $J$ (as shown in Algorithm~\ref{alg:train}).

\begin{algorithm}
  \caption{Training CacheNet}\label{alg:train}
  \begin{algorithmic}[1]
  \LeftComment\scalebox{0.9}{let $\eta$ be the learning rate}
  \LeftComment\scalebox{0.9}{let $\nu$ be the epoch stopping gradient updates in $\xi, \psi, \bar{\xi}, \bar{\psi}$}
    \Procedure{Train}{$\eta,\nu$}
      \While{$J^{(i)}$ is decreasing}
        \State draw the next batch $(X^{(i)}, Y^{(i)})$ from $D$
        \If{\#epoch $< \nu$}
            \State \scalebox{1.2}{$ \kappa \gets \kappa - \eta  \nabla_{\kappa} J^{(i)}$} 
        \Else
            \State \scalebox{1.2}{$ \chi \gets \chi - \eta  \nabla_{\chi} J^{(i)}$} 
        \EndIf
      \EndWhile
    \EndProcedure
  \end{algorithmic}
\end{algorithm}

\section{CacheNet Inference}
\label{sect:cn_inferencing}
With CacheNet, inference on end devices is accelerated by caching submodels of lower computation complexity. Depending on storage availability, one or multiple submodels can be stored on end devices. At any time, only one submodel is {\it
active} and is used to make predictions.  Given an input data sample $x$, the
active submodel $k$ outputs $\hat{y}$, the label of $x$ and the predictive
entropy $H(\hat{y}|x,\phi_k)$.  If the entropy is above a certain threshold,
$\hat{y}$ will be returned. Otherwise, two situations may arise, i) $x$ is
better handled by another cached submodel, and ii) $x$ is better handled by a
submodel not in cache. The latter case is called a {\it cache miss}.  Like
caching in memory hierarchy, CacheNet needs to handle cache misses by replacing
an cached ``item'' (model).  However, unique to CacheNet, the newly cached
``item'' is not the input data but a suitable model. 
\subsection{Submodel Selection}
In Section~\ref{sect:cn_training}, a $K$-dimension code $\bar{c}$ is computed for each input data sample using S-InfoVAE and the subsequent mapping in polar coordinates. In the training stage, $\bar{c}$ contributes to the input to the generator network that generates the parameters of respective submodels. In the inference stage, $\bar{c}$ can be used to select the submodel to make prediction given an input data sample. In particular, the joint optimization of S-InfoVAE, generator network and submodels aligns the output of S-InfoVAE with the submodel that has lowest predictive uncertainty. Thus, we can simply select the submodel whose index corresponds to the largest element in $\bar{c}$. Note in the inference stage, we do not need to calculate the predictive uncertainty for each submodel. Instead, 
only one submodel is applied. This is one of the key differences between
CacheNet and the work in \cite{fang2019teamnet}. S-InfoVAE can be executed on the end device or on
the edge server. In the former case, extra storage and computation overhead are
introduced. In the latter case, submodel storage and selection are delegated to the edge server.   
\subsection{Cache Replacement}
\label{subsect:cn_cache_replacement}
When the predictive entropy is below a preconfigured threshold using the active submodel, the input data $x$ is sent to the edge server, which will perform inference on behalf of the end device. Additionally, by submodel selection, the edge server determines a suitable model for $x$. A cache miss occurs on the end device. The newly selected submodel will be downloaded to the device to replace an existing model. Here, we adopt the Least Recently Used (LRU) policy and select the model that is least recently used. By the virtue of LRU, such a policy does not suffer from B\'{e}l\'{a}dy's anomaly.  In other words, as the cache size increases, the cache miss rate does not increase. 

\section{Evaluation}
\label{sect:cn_evaluation}

In this section, we evaluate CacheNet with two different real-world datasets (the CIFAR-10 \cite{krizhevsky2009learning} and the Frontal View Gait (FVG) dataset \cite{gait-recognition-via-disentangled-representation-learning}), and test CacheNet's performance with respectively two different neural network models (Shake-Shake \cite{gastaldi2017shake} and ResNet \cite{he2016deep}). 

\subsection{Datasets}

\paragraph*{CIFAR-10} 
CIFAR-10 \cite{krizhevsky2009learning} is a benchmark dataset for image classification, comprised of $60,000$, $32\times32$ colored images and $10$ classes (such as automobile, bird and horse) in total. Although CIFAR-10 is not a video dataset and is an image classification dataset, image classification is still a valid scenario if it is in a video processing pipeline (e.g. where the background has been removed previously from the video). In this case, temporal locality still applies while consecutive images would be less redundant owing to the earlier steps in the pipeline. For example, a horse (possibly shot in different angles with different scales) in the video is still likely to appear multiple times in the sequence, even when the background has been removed (e.g. object detection). 

For fair evaluation, test images are not supposed to be seen during training. Thus, we set aside $10,000$ images for testing. To simulate  temporal locality in a video pipeline, the synthesized image sequence in testing is composed of a sample of the $10,000$ images in the way that images with the same label are concatenated together. 

To reduce overfitting, data augmentation techniques are used, including: 1) random cropping and 2) random flipping. Apart from data augmentation, Shake-Shake regularization has been applied to reduce overfitting \cite{gastaldi2017shake}, and batch normalization to reduce internal covariate shift \cite{ioffe2015batch}.

\begin{figure*}[ht] 
\centering
\subfloat[Partition A]{
\includegraphics[width=0.23\linewidth]{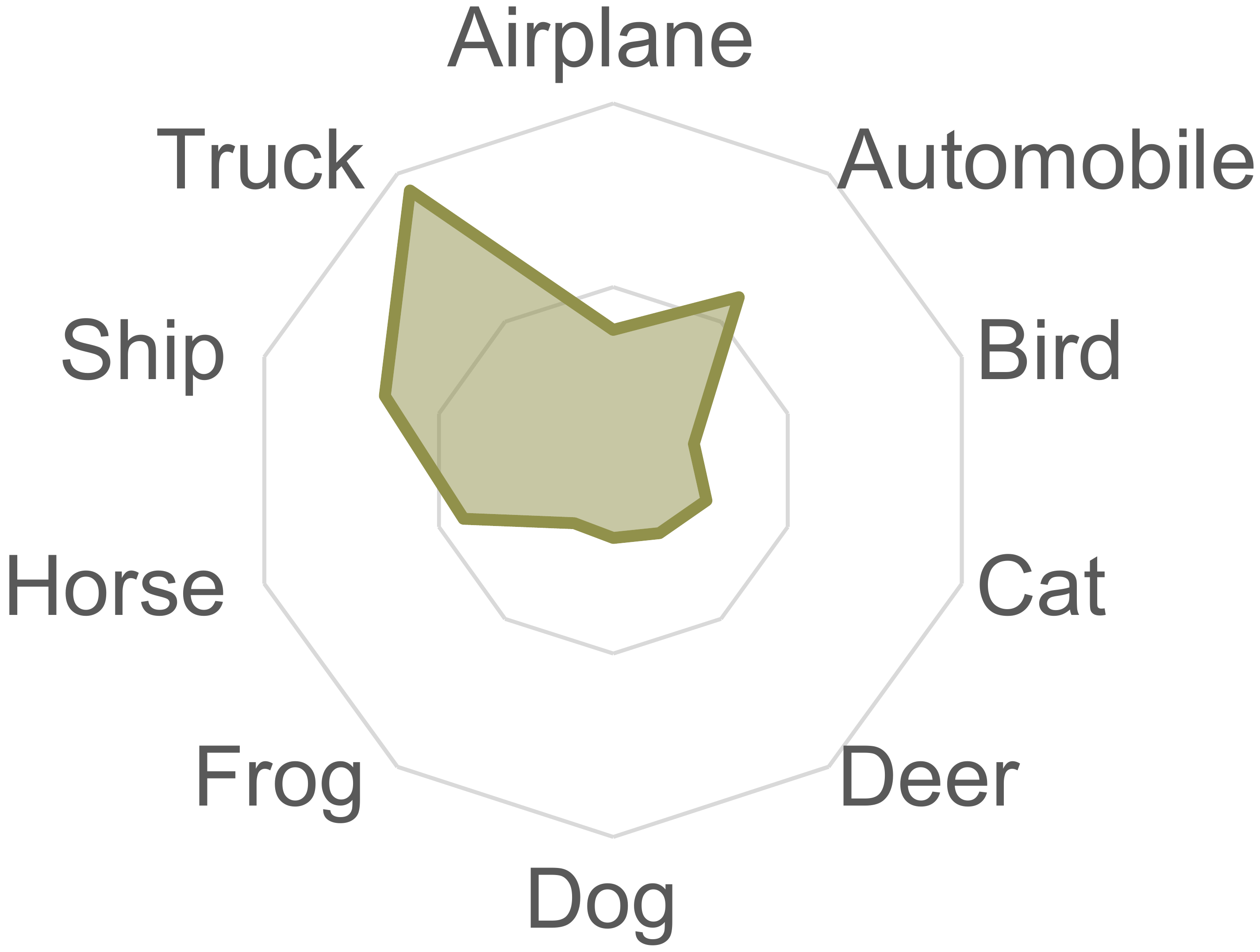}
\label{fig:cn_cifar_vae_part_a}
}
\subfloat[Partition B]{
\includegraphics[width=0.23\linewidth]{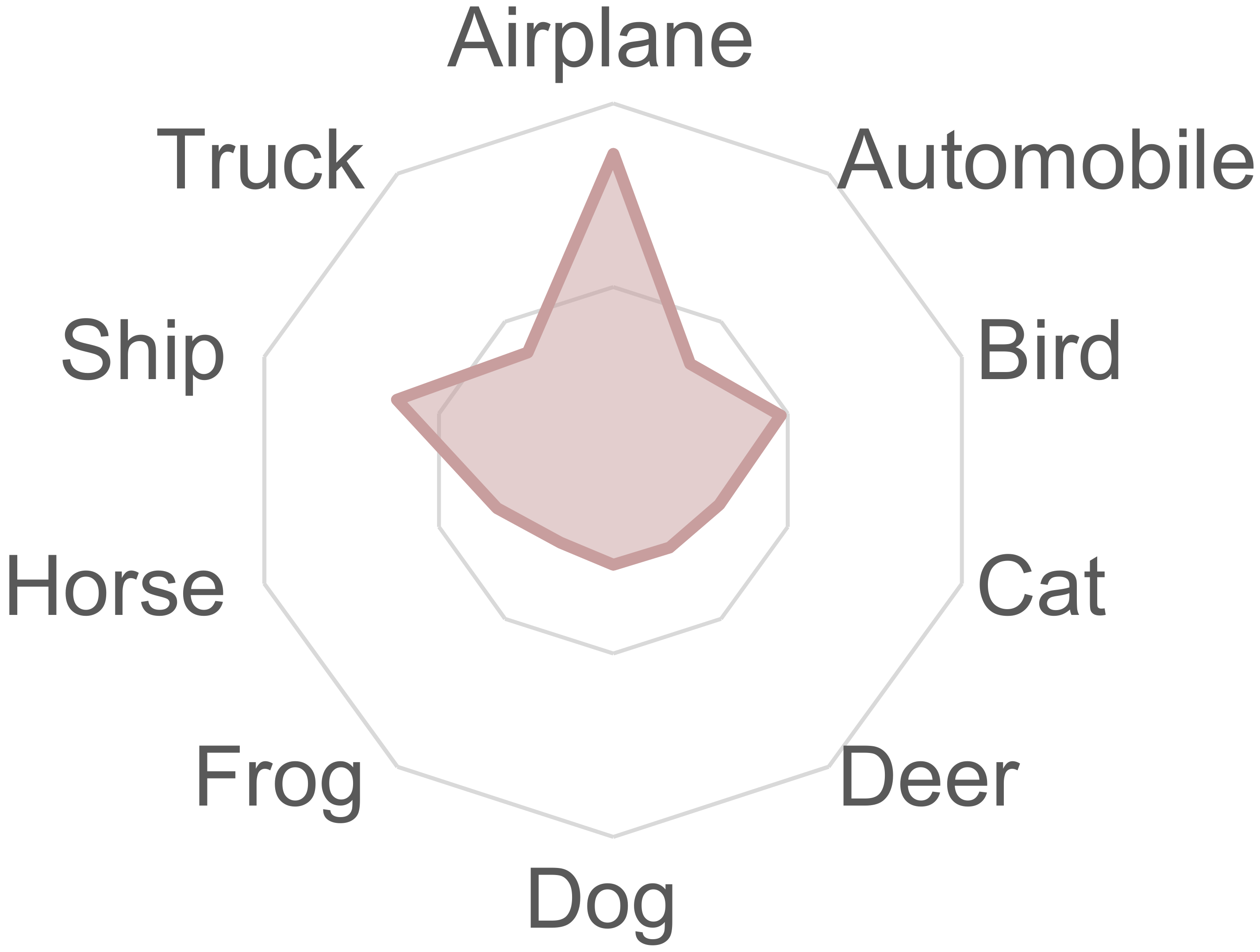}
\label{fig:cn_cifar_vae_part_b}
} 
\subfloat[Partition C]{
\includegraphics[width=0.23\linewidth]{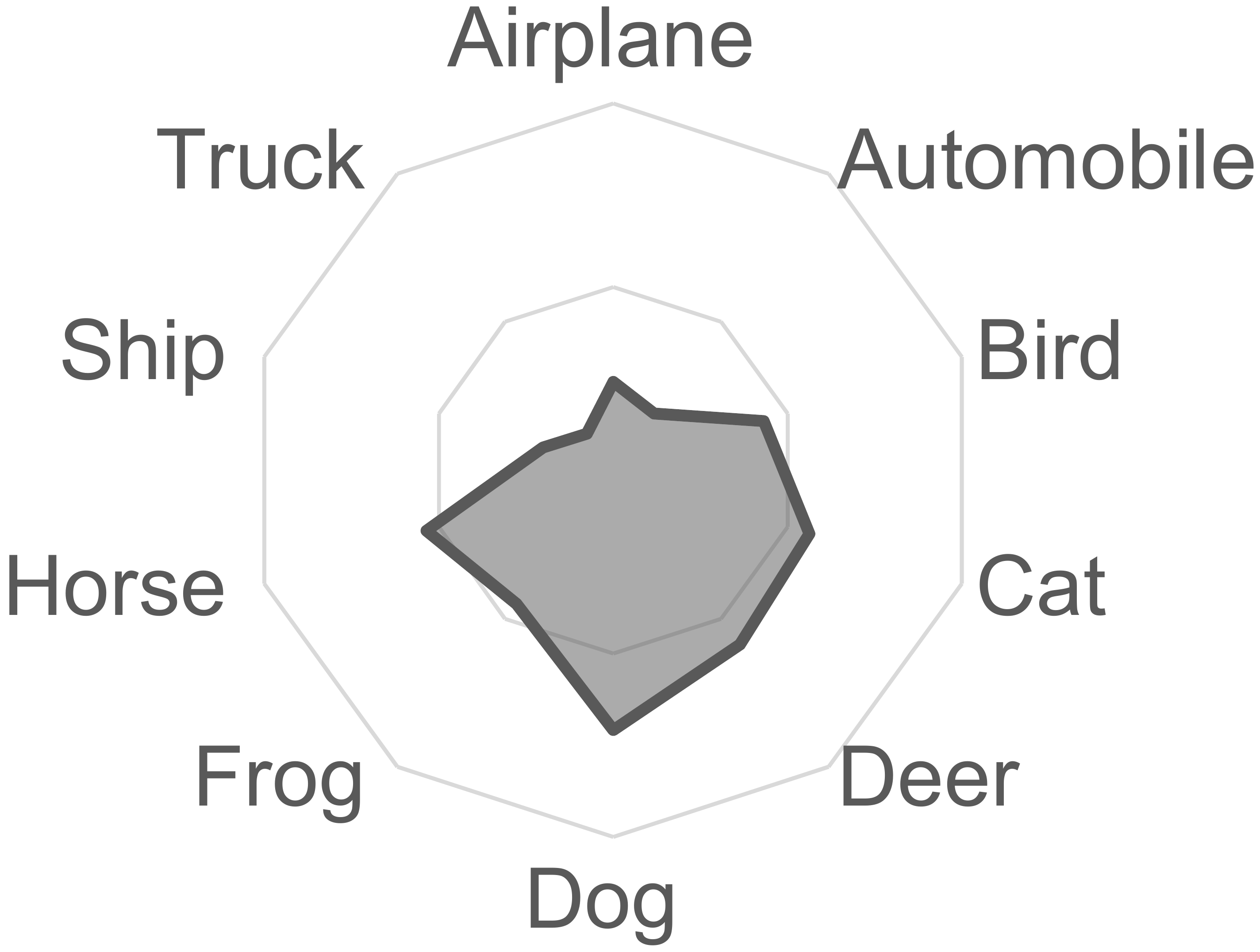}
\label{fig:cn_cifar_vae_part_c}
}
\subfloat[Partition D]{
\includegraphics[width=0.23\linewidth]{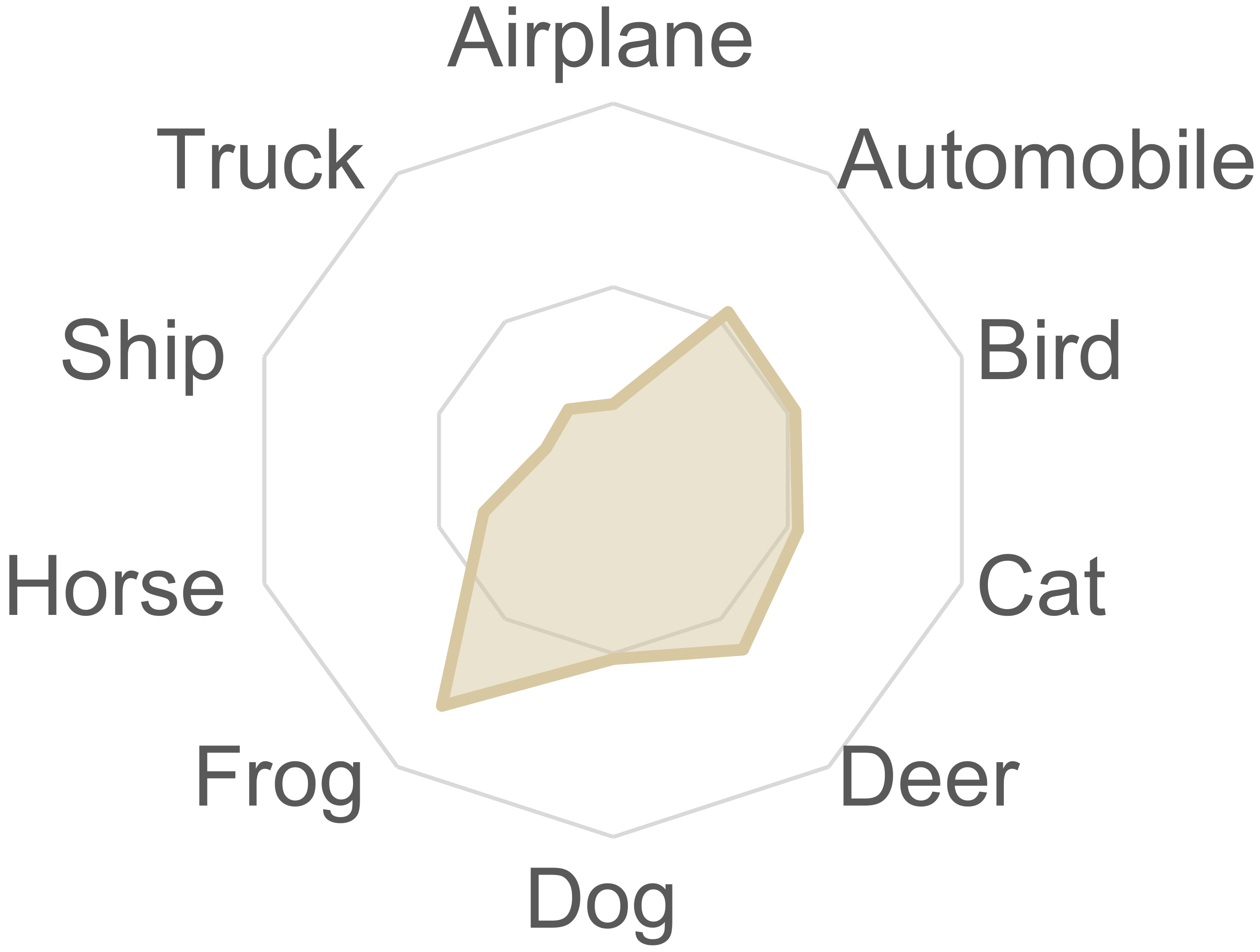}
\label{fig:cn_cifar_vae_part_d}
}
\caption[CacheNet's Specialization in CIFAR-10]{For CIFAR-10, partition A is more specialized in trucks and automobiles; partition B can predict airplanes and ships better; partition C is more certain of the horse, dog, and cat classes; partition D knows more about frogs and deer.}
\label{fig:cn_cifar_vae}
\end{figure*}

\begin{figure*}[ht] 
\centering
\subfloat[Partition A]{
\includegraphics[width=0.23\linewidth]{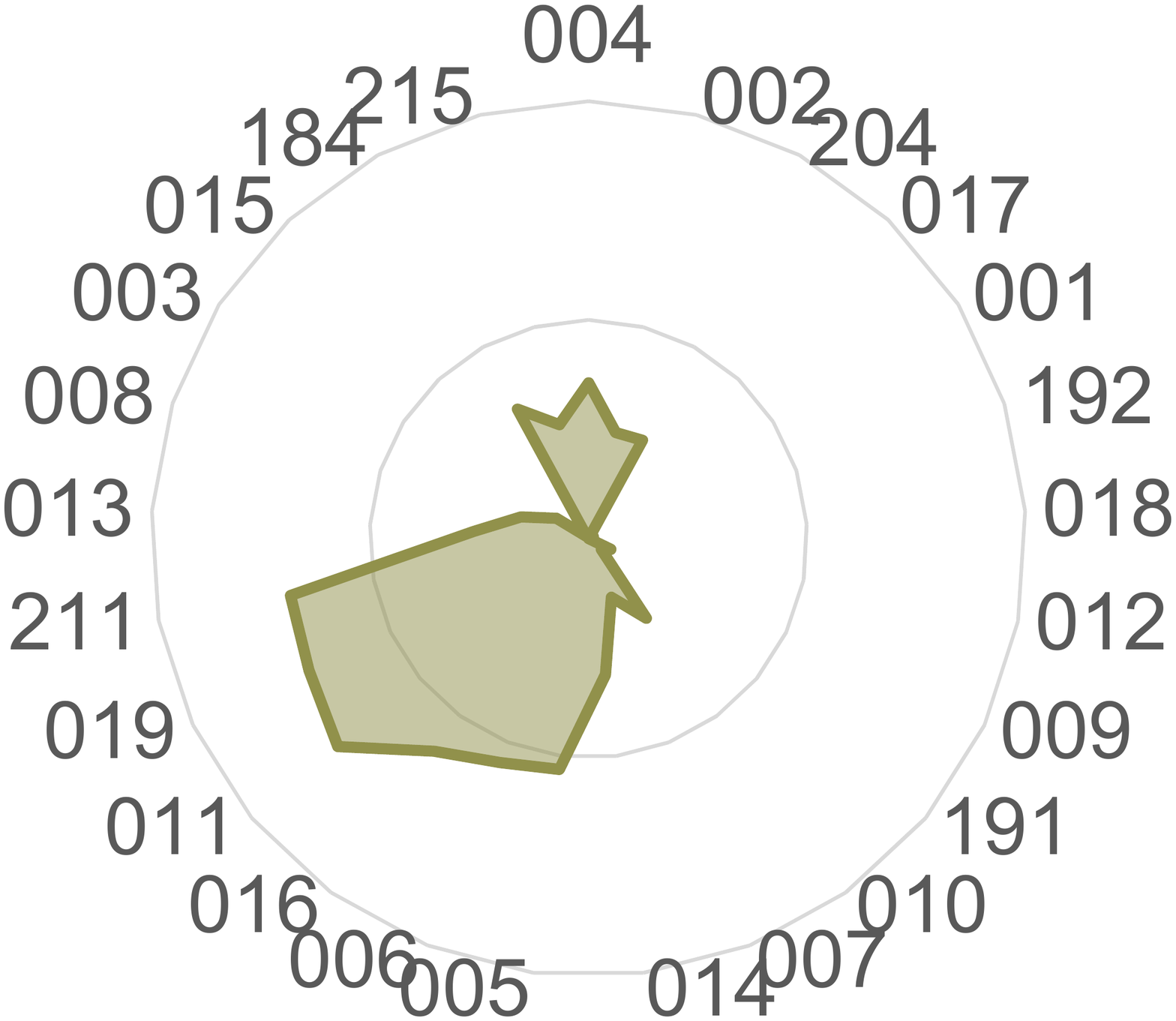}
\label{fig:cn_fvg_vae_part_a}
}
\subfloat[Partition B]{
\includegraphics[width=0.23\linewidth]{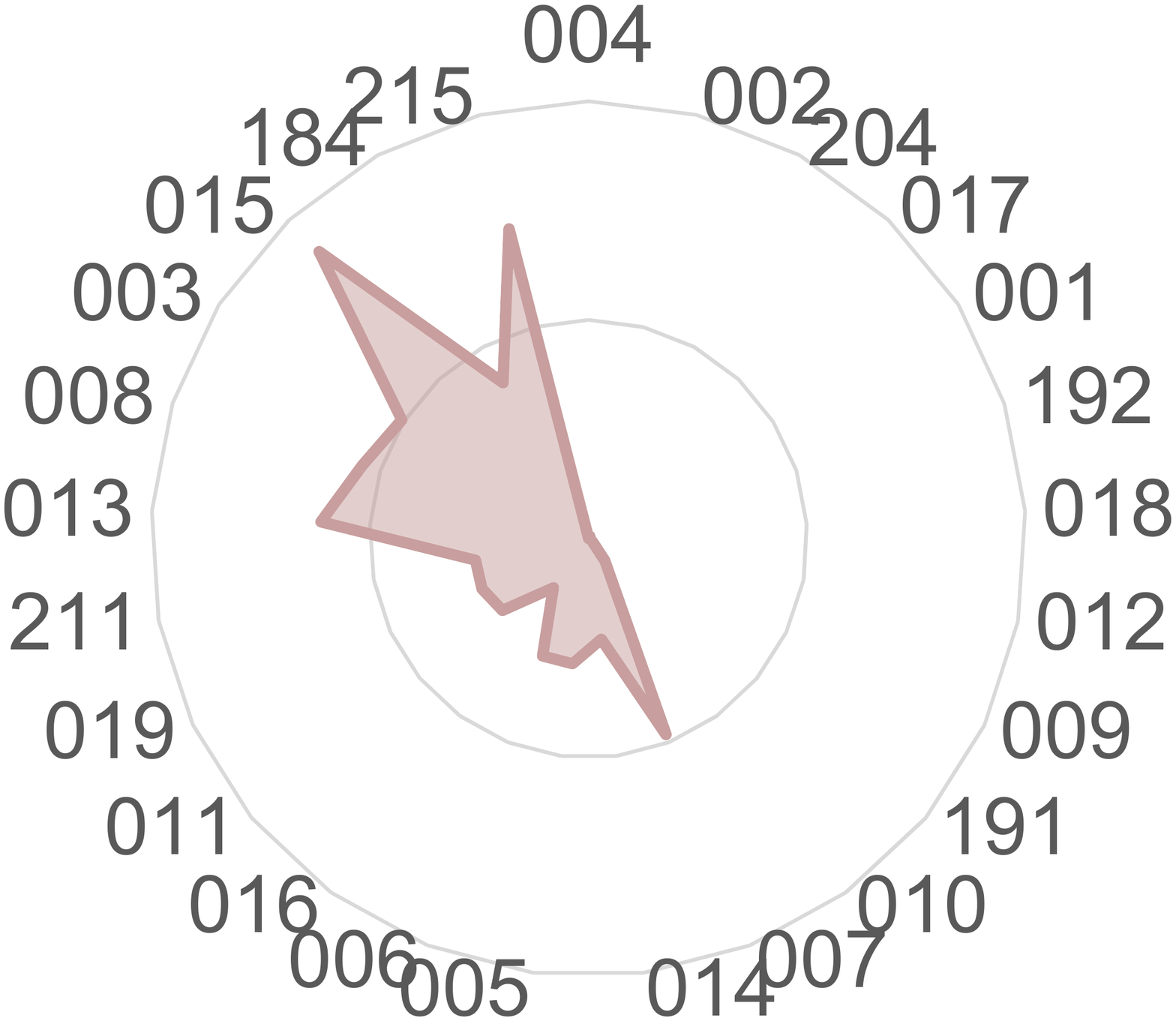}
\label{fig:cn_fvg_vae_part_b}
} 
\subfloat[Partition C]{
\includegraphics[width=0.23\linewidth]{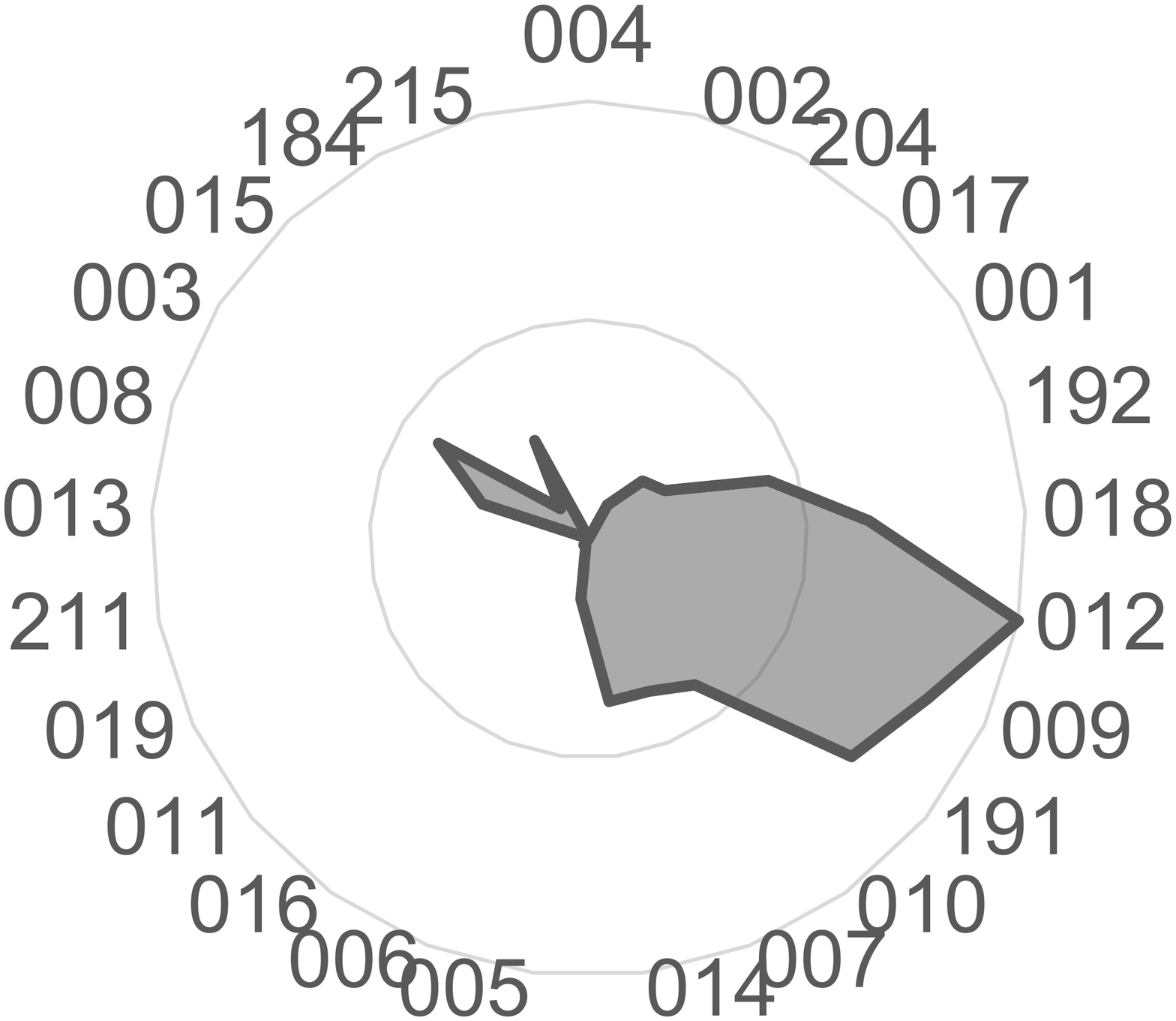}
\label{fig:cn_fvg_vae_part_c}
}
\subfloat[Partition D]{
\includegraphics[width=0.23\linewidth]{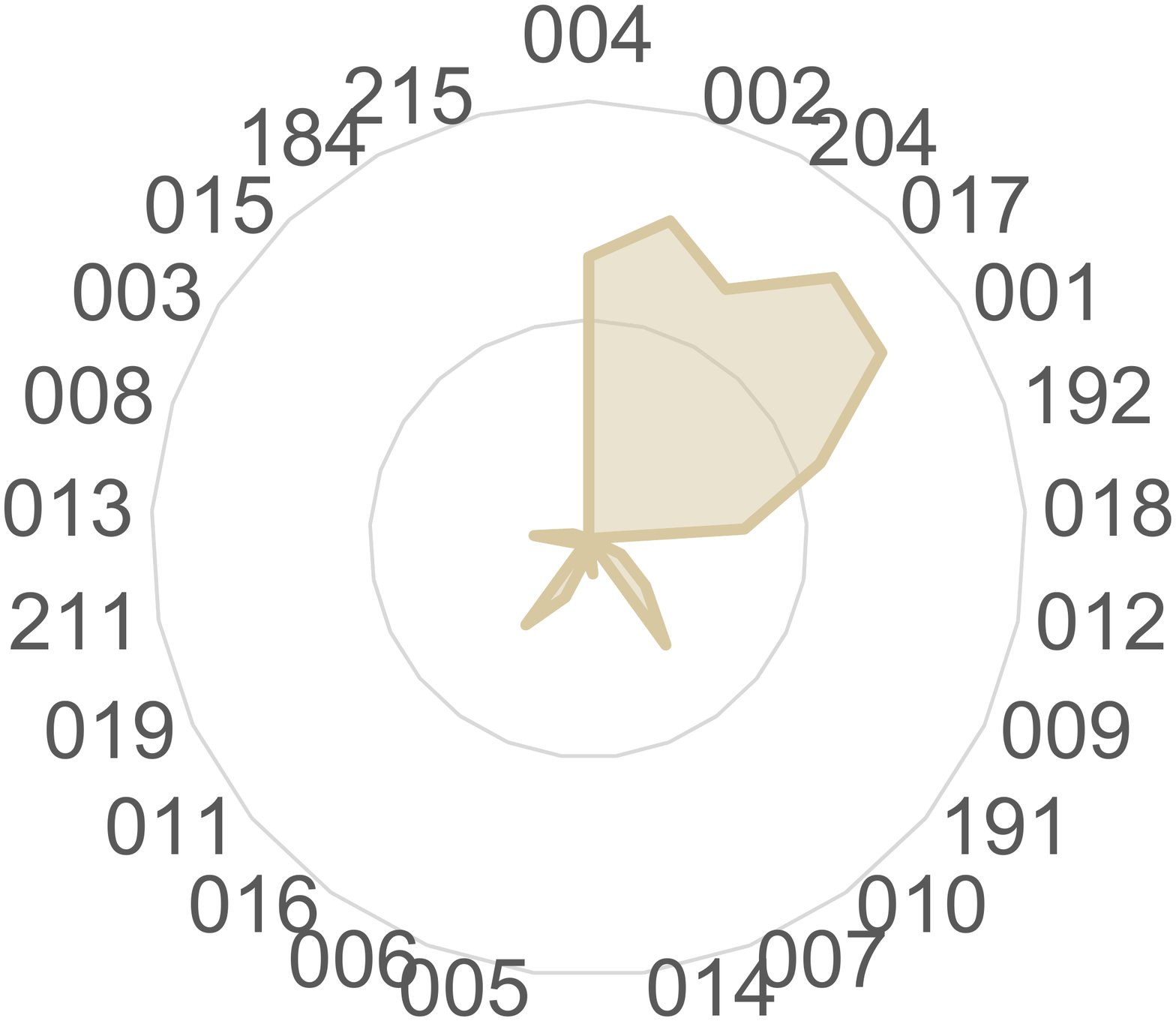}
\label{fig:cn_fvg_vae_part_d}
}
\caption[CacheNet's Specialization in FVG]{For FVG, partition A is more certain of person identifier (PID) $211$, $019$, $011$, $016$, $006$, and $005$; partition B is specialized in PID $013$, $008$, $003$, $015$, and $215$; partition C knows more about PID $010$, $191$, $009$, $012$, and $018$; partition D is more certain of PID $004$, $002$, $204$, $017$, and $001$.}
\label{fig:cn_fvg_vae}
\end{figure*}

\paragraph*{FVG} 
FVG is a person re-identification dataset, first introduced in \cite{gait-recognition-via-disentangled-representation-learning}, as a collection of frontal walking videos from 226 subjects. In total it contains $2,85$6 videos at $15$ frames per second with a resolution of $1920\times1080$. 

In contrast to other person re-identification datasets in surveillance settings, FVG is the first to focus on the frontal view. This makes it useful for two reasons: (i) It contains temporal locality in the form of a fixed background and the same subject walking towards the camera, which can be leveraged for caching. 
(ii) Having a frontal view means that it contains minimal gait cues.

To reduce the chance of overfitting and improve generalization ability we use data augmentation techniques \cite{shorten2019survey} on this dataset as well. We first oversample the images by interpolating between existing frames. This technique preserves the extrinsic distribution while allowing us to experiment with cache performance by varying the degree of temporal locality. Additionally, in the original dataset the average frame rate of each video is $15$ frames per second. That is only half of the frame rate of a HD video (generally $30$-$60$ frames per second). Since each video sample is of the subject walking straight towards the camera from a distance, it contains intrinsic depth information that can be utilized to synthesize intermediate frames. As such, we use DAIN \cite{bao2019depth}, a state of the art approach that leverages the depth information to interpolate between the frames.

\subsection{Experimental Setup}

CacheNet's performance is evaluated on two different datasets (CIFAR-10 and FVG), three end devices (Jetson TX2, Jetson Nano, and Raspberry Pi 4) and two deep learning frameworks (NCNN and TensorFlow Lite). There are two baselines to compare with: a) running a full model (Shake-Shake-26 or ResNet-50) on an end devices ({\it Device}), and b) offloading the full model onto an edge server ({\it Edge}). Different thresholds are evaluated to better trade off hit rate against accuracy: for CIFAR-10, they are $0.5$, $0.6$, $0.7$, $0.75$, and $0.8$; for FVG, they are $1.5$, $2.0$, $2.3$, $2.5$, and $2.7$. (Here, A larger threshold in FVG is caused by more classes (neurons) at the output layers.) Furthermore, on the FVG dataset, we evaluate two video frame rates 15 FPS and 30 FPS (at inference) with both trained at 60 FPS (by using data augmentation).

The number of submodels $K$ is set to $4$ in the experiment. For a possible convergence, CacheNet is trained on TensorFlow with $4$ NVIDIA 1080TI graphic cards. Per CIFAR-10, CacheNet partitions Shake-Shake-26 (with 26 layers) into $4$ Shake-Shake-8 (with 8 layers) neural network submodels for caching; per FVG, CacheNet partitions ResNet-50 into $4$ ResNet-20 (but with fewer channels per layer). 

CacheNet's inference is distributed between the edge server and the end device in the experiment. One submodel is cached and runs on the end device, while submodel storage and selection are delegated to the edge server. 
End devices are evaluated with limited storage to mimic that of end devices such as security cameras. 
One Intel Xeon CPU core is enabled on the edge server to representatively simulate those of most of WiFi access points (e.g. a 500 megahertz MIPS processor on the Arlo SmartHub) with generally limited compute power. There is sufficient storage on the edge server comparable to that of WiFi access points (e.g. a 128 gigabyte SD card on the eufy HomeBase and a 2 terabyte USB hard drive onto the Arlo SmartHub). End devices are connected to the edge server through a WiFi router, via WiFi 5G (802.11ac) and an Ethernet cable, respectively.

TensorFlow submodels from training were converted to NCNN and TensorFlow Lite submodels and stored on the edge server. Whenever a submodel is needed, the end device initiates an HTTP/1.1 request to the edge server, and then the chosen submodel on the edge server is encoded in an HTTP/1.1 and protobuf message then sent back to the end device. OpenCV is also used in the experiment to read a testing image sequence (video) into the memory and convert them into tensors.



\subsection{Results}

\paragraph*{Specialization} 
Specialization is crucial for caching because a non-specialized partition cannot match the full model's performance by any chance even for a smaller subset of input. There are two aspects we would investigate: (a) whether similar input images are mapped to the same partition; (b) whether input images are partitioned roughly evenly to fully utilize the capacities of all submodels, considering both CIFAR-10 and FVG are approximately balanced datasets. 

Figure~\ref{fig:cn_cifar_vae} and \ref{fig:cn_fvg_vae} illustrate the number of input images per class being mapped (by S-InfoVAE) to a particular partition. They answer most of our concerns: (a) A partition roughly covers most of similar input images from the same class. e.g. for CIFAR-10, partition A is more specialized in trucks and automobiles; partition B knows better airplanes and ships; for FVG, partition A is more certain of person identifier (PID) $211$, $019$, $011$, $016$, $006$, and $005$; and partition B is specialized in PID $013$, $008$, $003$, $015$, and $215$. (b) In both cases of CIFAR-10 or FVG, the areas (Figure~\ref{fig:cn_cifar_vae} and \ref{fig:cn_fvg_vae}) that partitions occupy are roughly even. It implies the total number of (image) instances they span are approximately the same. 

\paragraph*{Convergence} 
Not all neural networks converge. Thus, whether CacheNet is useful depends on whether it converges or not per the particular dataset. In CIFAR-10 and FVG, we can see  (Figure~\ref{fig:cn_loss}) that their losses both start high but converge closer and closer to zero. Since FVG is a smaller dataset compared to CIFAR-10, CacheNet with FVG converges faster (in fewer iterations) than CIFAR-10.

\begin{figure}[t]
\centering
\subfloat[CIFAR-10]{
\includegraphics[width=0.45\linewidth]{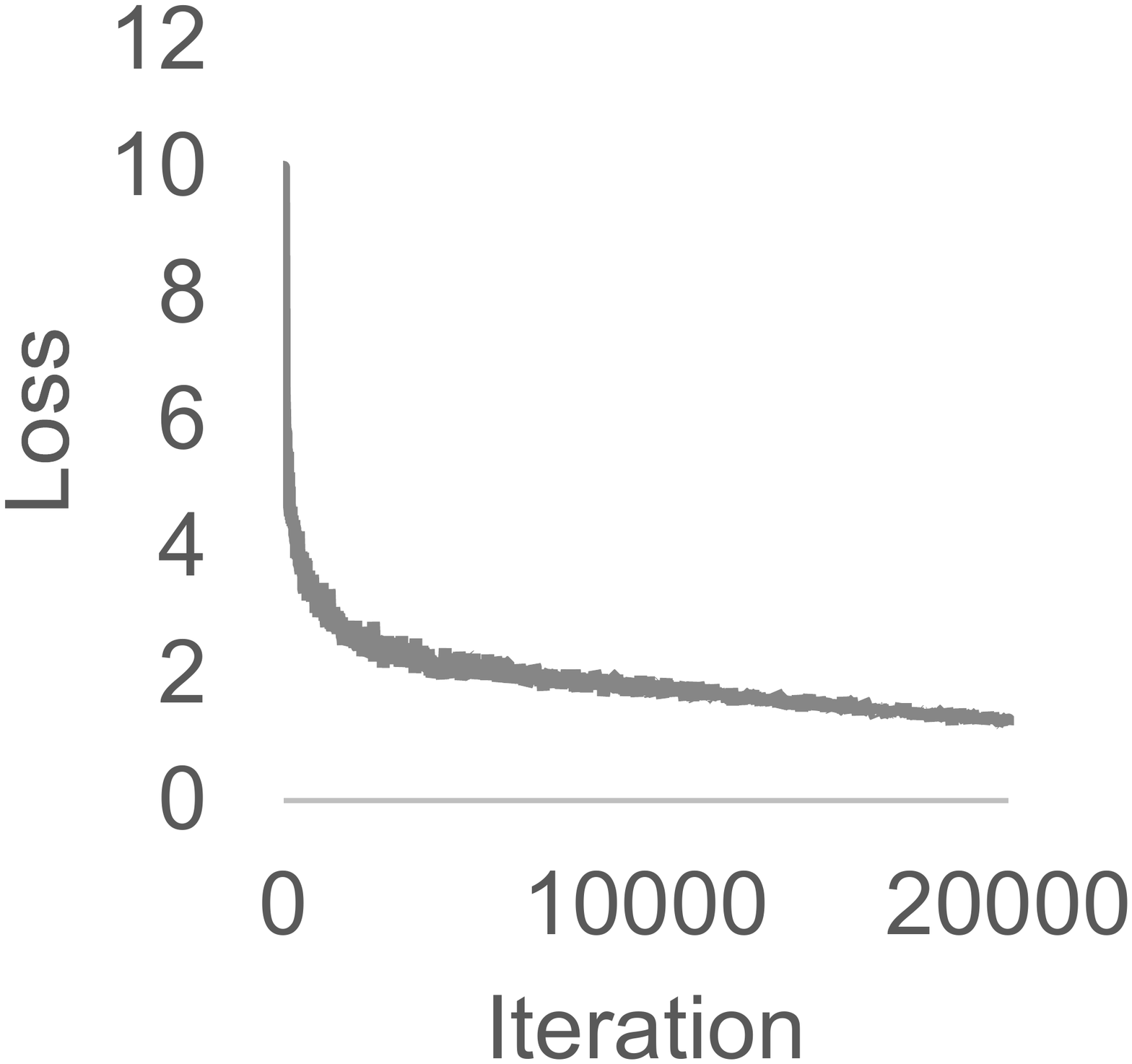}
\label{fig:cn_cifar_loss}
}
\subfloat[FVG]{
\includegraphics[width=0.45\linewidth]{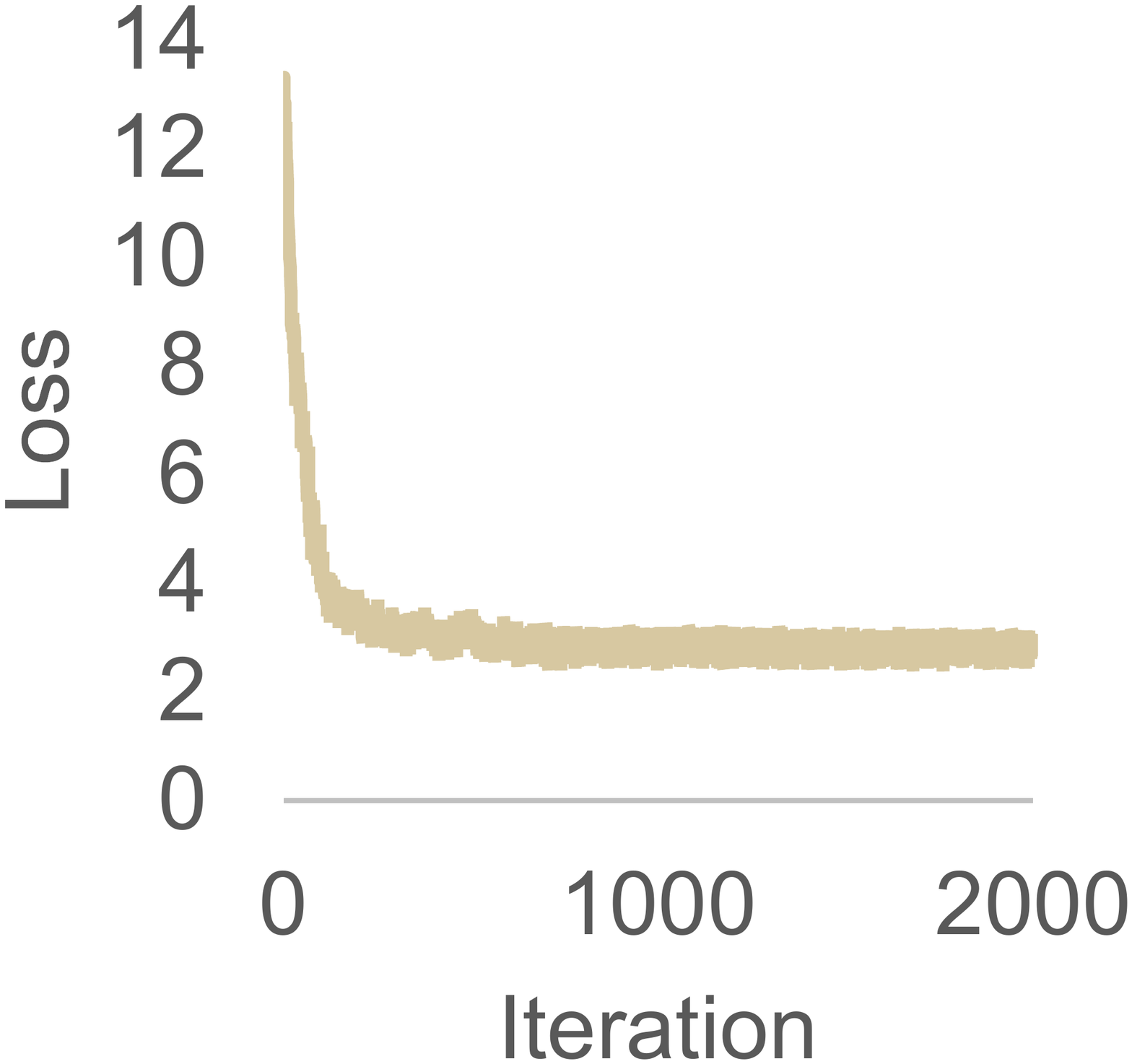}
\label{fig:cn_fvg_loss}
}
\caption[CacheNet's Convergence]{CIFAR-10's and FVG's losses both start high but converge closer and closer to zero. }
\label{fig:cn_loss}
\end{figure}

\begin{figure}[t]
\centering
\subfloat[CIFAR-10]{
\includegraphics[width=0.92\linewidth]{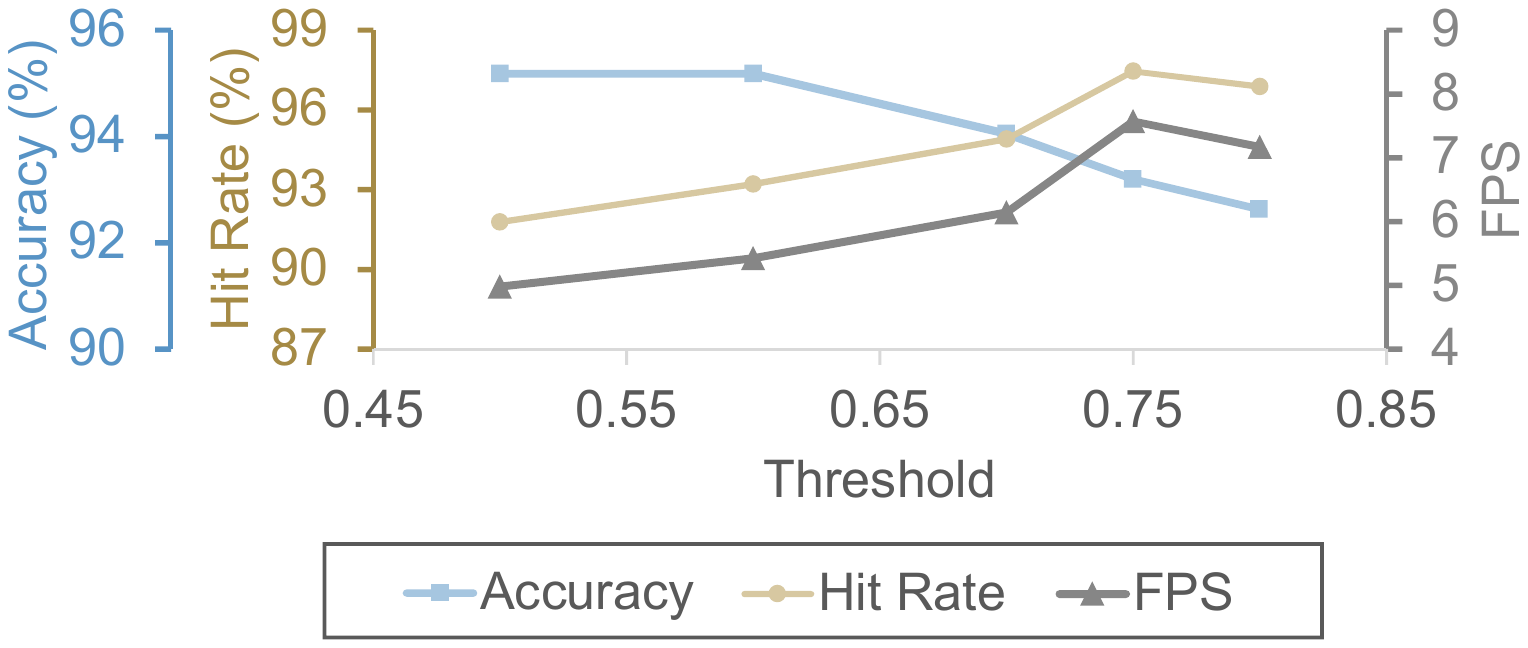}
\label{fig:cn_cifar_fps_hr_acc}
}\\
\subfloat[FVG]{
\includegraphics[width=0.92\linewidth]{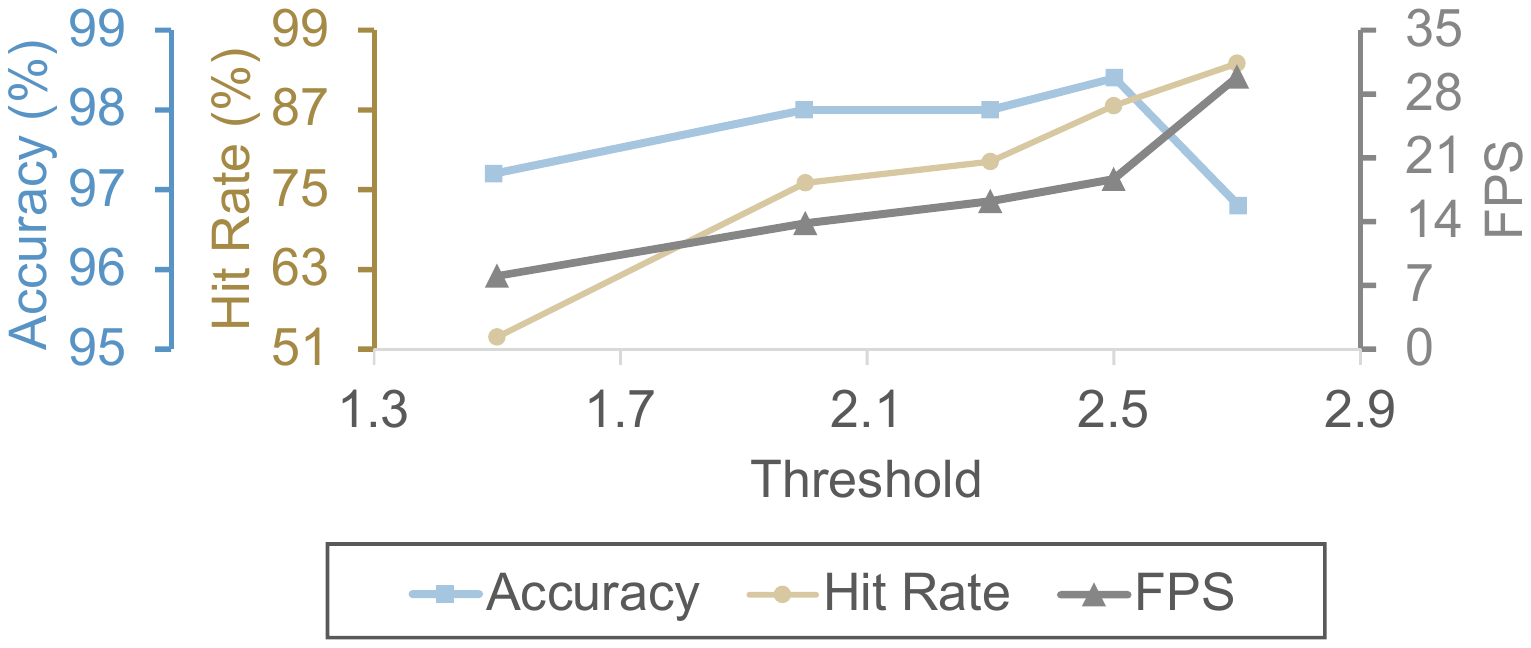}
\label{fig:cn_fvg_fps_hr_acc}
}
\caption[]{FPS and hit rate increase most of the time as the preconfigured threshold increases. Accuracy generally decreases because predictions of less certainty are considered valid. When multiple submodels outperforming the full model (in the FVG dataset), there is a small peak observed before the accuracy declines. }\label{fig:cn_fps_n_acc_n_hit}
\end{figure}

\begin{figure*}[ht]
\ffigbox[]{%
\begin{subfloatrow}
  \ffigbox[0.61\linewidth]
    {\caption{CIFAR-10 FPS}\label{fig:cn_cifar_fps}}
    {\includegraphics[width=1\linewidth]{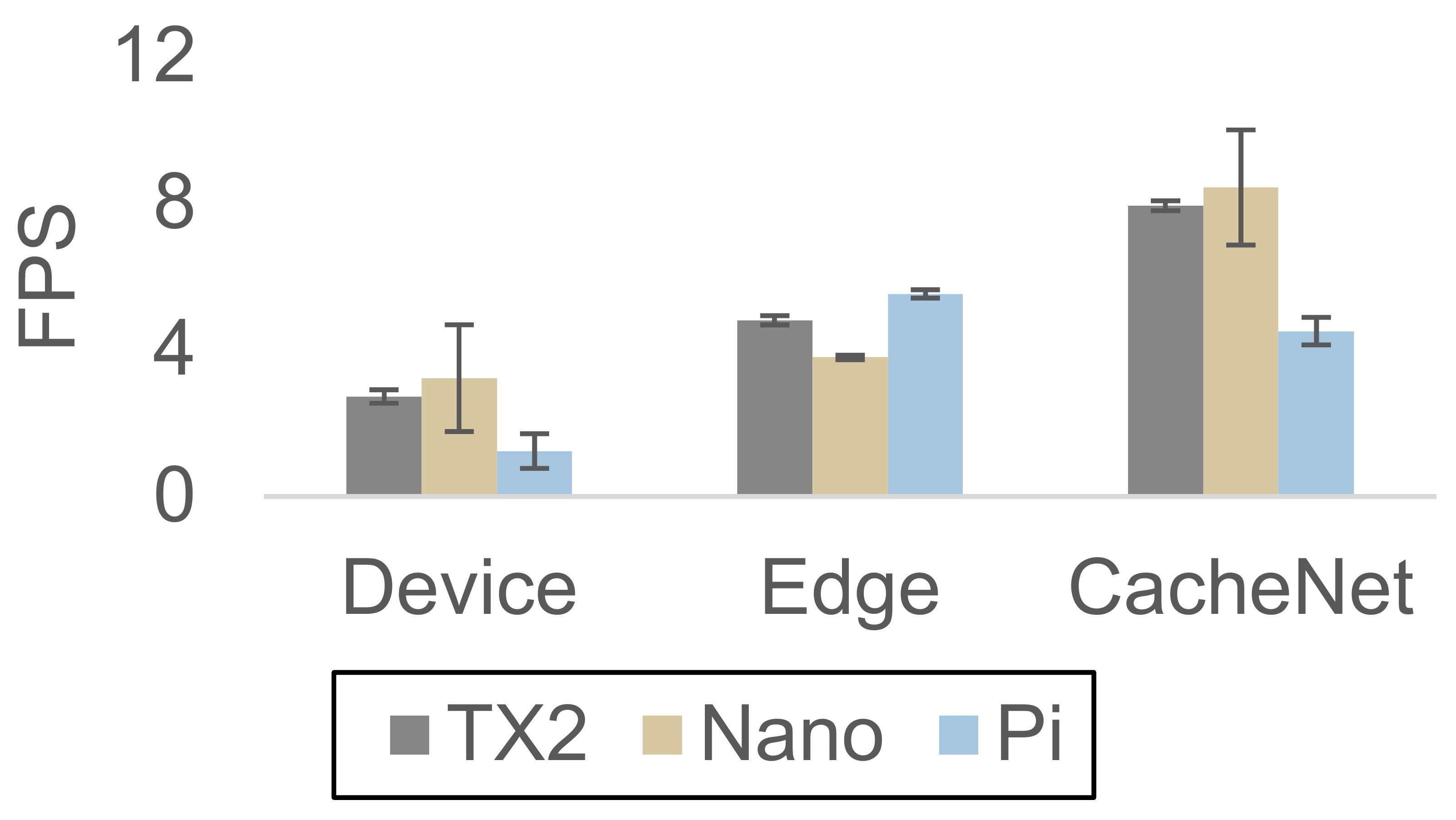}}
  \ffigbox[0.61\linewidth]
    {\caption{FVG FPS}\label{fig:cn_fvg_fps}}
    {\includegraphics[width=1\linewidth]{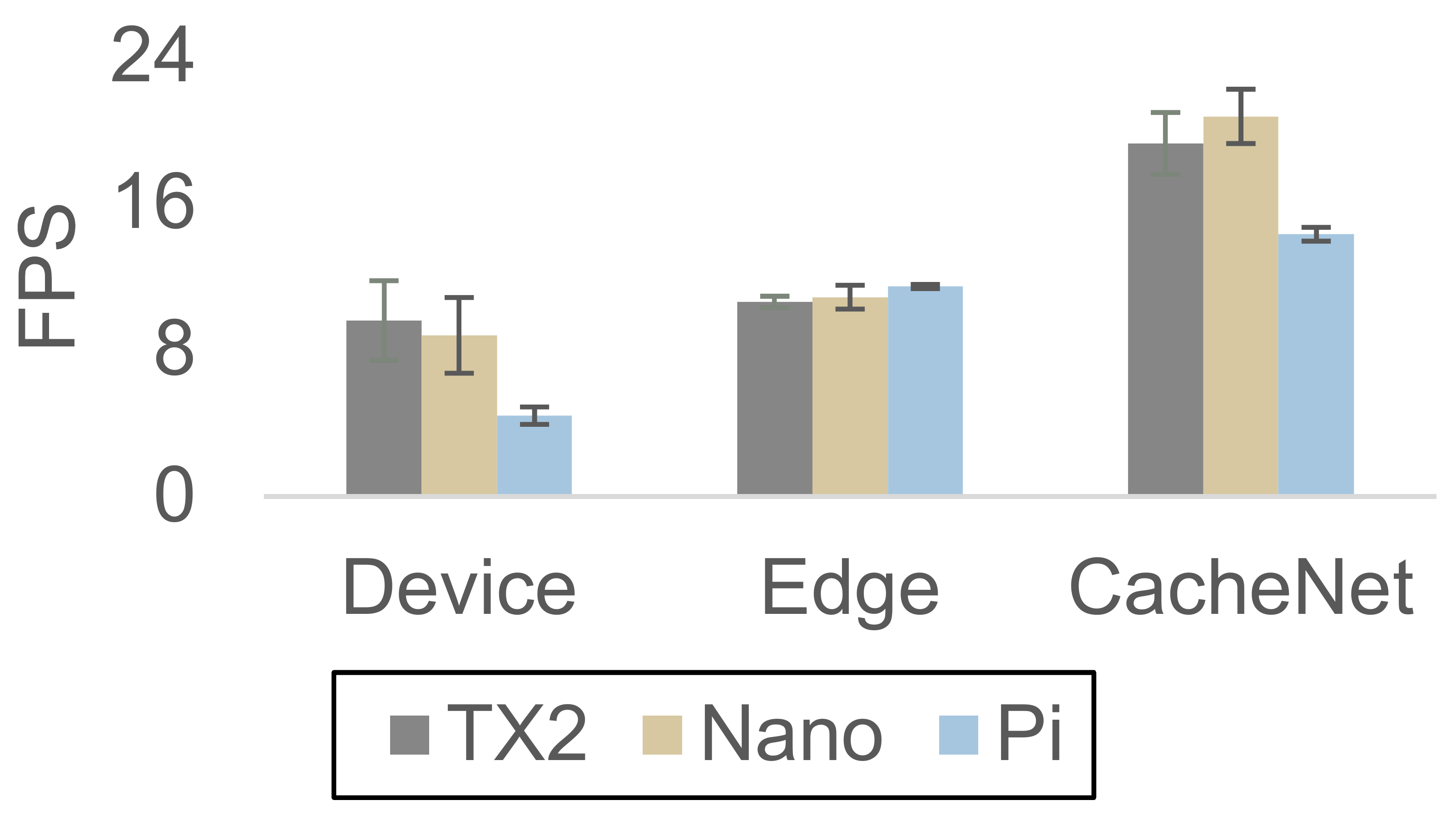}}\hspace{0.023\linewidth}
  \ffigbox[0.61\linewidth]
    {\caption{Accuracy}\label{fig:cn_cifar_fvg_acc}}
    {\includegraphics[width=1\linewidth]{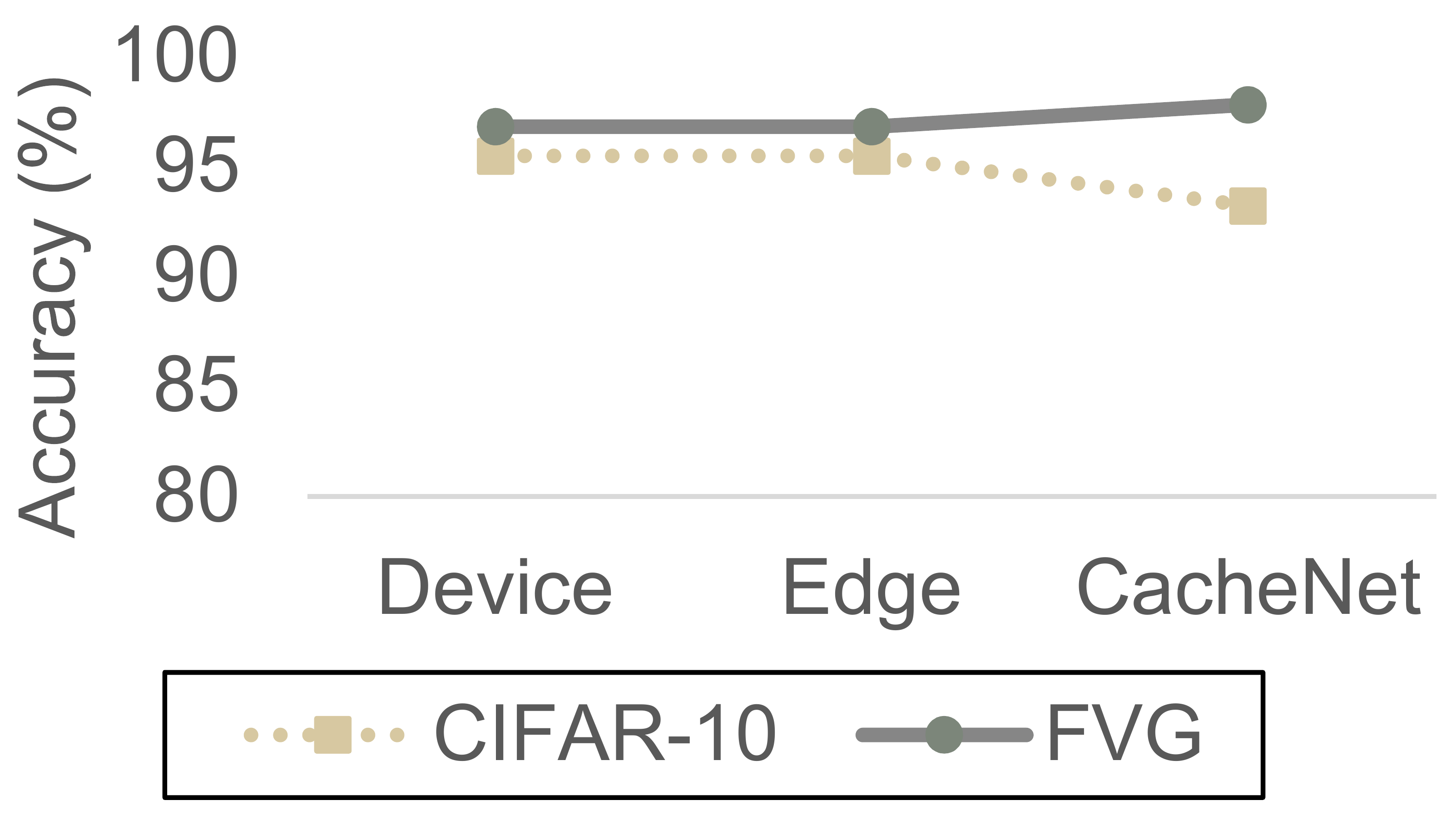}}
\end{subfloatrow}\hspace*{\columnsep}
}{\caption[]{Medians are taken and standard deviations are plotted as error bars. CacheNet is faster than the other two baselines, while accuracy is comparable to the full model.}\label{fig:cn_fps_n_acc}}
\end{figure*}

\paragraph*{Cache replacement} 
As it is discussed in Section~\ref{subsect:cn_cache_replacement}, if the predictive entropy is below a preconfigured threshold, the inference is performed locally; otherwise, it is done remotely on the edge server. Figure~\ref{fig:cn_cifar_fps_hr_acc} and \ref{fig:cn_fvg_fps_hr_acc} demonstrate that the FPS increases as the threshold increased most of the time for both CIFAR-10 and FVG. The reason is that the hit rate is generally higher when the threshold is higher. Fewer cache replacement is needed and more and more images are being processed locally, which speeds up the inference. On the other hand, Figure~\ref{fig:cn_cifar_fps_hr_acc} and \ref{fig:cn_fvg_fps_hr_acc} show that a higher hit rate generally comes at the cost of lower accuracy. It is because a higher threshold allows prediction with higher entropy (uncertainty) to become valid. Higher entropy predictions are of lower quality that decrease the overall accuracy. We find that in practice, it is a trade-off between hit rate and accuracy.

\paragraph*{Comparison to baselines}

A comparison between CacheNet and the other two baselines ({\it Device} and {\it Edge}) are shown in Figure~\ref{fig:cn_cifar_fps} and \ref{fig:cn_fvg_fps}. Medians (of all the scenarios) are taken and standard deviations are plotted as error bars in those figures. For CacheNet, preconfigured threshold $0.75$ and $2.5$ are chosen respectively per CIFAR-10 and FVG to the best extend to trade off hit rate against accuracy. As visualized on those figures, CacheNet is much faster than the other two baselines: for CIFAR-10, $3.2X$ of {\it Device} and $1.6X$ of {\it Edge}; for FVG, $2.5X$ of {\it Device} and $1.7X$ of {\it Edge}. At the same time, the accuracy of CacheNet is comparable with that of the full model, with only a slight drop on CIFAR-10, but increasing a bit on FVG.

More details are given in Table~\ref{tab:cn_cifar_ncnn}--\ref{tab:cn_fvg_fps30_tflite}. CacheNet generally works better on end devices with more computing power such as Jetson TX2 and Jetson Nano. Offloading to the edge server ({\it Edge}) releases end devices' burden thus CPU usages are lowest among three. However, it also implies that the computing power on the end device has not been fully utilized. Memory usages fall into a similar pattern as that of CPU usages. If we divide elapsed time into the time that is run on the end device and that is performed on the edge server (including time for upload and download), we observe that CacheNet distributes the total (computation) time between the end device and the edge server, while the other two baselines are not taking the advantages of distributed computing, that either runs locally ({\it Device}) or computes on the edge server most of the time ({\it Edge}).

\paragraph*{Comparison across frameworks} NCNN and TensorFlow Lite are both lightweight deep learning framework tailored for embedded devices with limited compute power, memory and storage. A comparison between TensorFlow Lite and NCNN are given in Table~\ref{tab:cn_cifar_ncnn}--\ref{tab:cn_fvg_fps30_tflite}. CacheNet with NCNN and TensorFlow Lite both outperform the baselines. NCNN is slightly more efficient than TensorFlow Lite for both CIFAR-10 and FVG, while TensorFlow Lite consumes far less memory than NCNN. 

\paragraph*{Comparison across devices} From Figure~\ref{fig:cn_fps_n_acc}, we observe that CacheNet performs better on end devices with higher compute power such as Jetson TX2 and Jetson Nano. Raspberry Pi incurs more time on submodel inference, which leads to lower FPS. Detailed numerical comparisons can be found in Table~\ref{tab:cn_cifar_ncnn}--\ref{tab:cn_fvg_fps30_tflite}.

\begin{table*}[ht]
\centering
\setlength\tabcolsep{4pt}

\caption[Experimental Results with CIFAR-10 on Jetson TX2, Jetson Nano, and Raspberry Pi 4 - NCNN]{Experimental Results with CIFAR-10 on Jetson TX2, Jetson Nano, and Raspberry Pi 4 - NCNN}

\begin{tabular}{|l|rrr|rrr|rrr|}
\hline
&\multicolumn{3}{|c|}{\bf Jetson TX2} &\multicolumn{3}{|c|}{\bf Jetson Nano} &\multicolumn{3}{|c|}{\bf Raspberry Pi 4} \\
\hline
& \bf Device & \bf Edge & \bf CacheNet & \bf Device & \bf Edge & \bf CacheNet & \bf Device & \bf Edge & \bf CacheNet   \\ 
\hline
\bf FPS&2.85&4.89&\bf8.02&4.25&3.83&\bf9.53&1.57&\bf5.60&4.77\\
\bf Accuracy (\%)&95.47&95.47&93.20&95.47&95.47&93.20&95.47&95.47&93.20\\
\bf CPU (\%)&84.53&4.25&60.26&96.83&5.96&57.23&98.65&1.21&63.75\\
\bf Memory (Mb)&610.71&1.86&198.76&863.14&1.94&241.80&875.53&0.91&201.75\\
\bf Time (s)&124.06&72.16&\bf44.03&83.06&92.20&\bf37.04&224.15&\bf63.02&74.03\\
\bf \branch{} Device (s)&124.06&0.80&26.25&83.06&0.66&17.89&224.15&0.63&42.28\\
\bf \leaf{} Edge (s)&0.00&71.37&17.79&0.00&91.54&19.14&0.00&62.39&31.75\\
\hline
\end{tabular}

\label{tab:cn_cifar_ncnn}
\end{table*}

\begin{table*}[ht]
\centering
\setlength\tabcolsep{4pt}

\caption[Experimental Results with CIFAR-10 on Jetson TX2, Jetson Nano, and Raspberry Pi 4 - TensorFlow Lite]{Experimental Results with CIFAR-10 on Jetson TX2, Jetson Nano, and Raspberry Pi 4 - TensorFlow Lite}

\begin{tabular}{|l|rrr|rrr|rrr|}
\hline
&\multicolumn{3}{|c|}{\bf Jetson TX2} &\multicolumn{3}{|c|}{\bf Jetson Nano} &\multicolumn{3}{|c|}{\bf Raspberry Pi 4} \\
\hline
& \bf Device & \bf Edge & \bf CacheNet & \bf Device & \bf Edge & \bf CacheNet & \bf Device & \bf Edge & \bf CacheNet   \\ 
\hline
\bf FPS&2.59&4.71&\bf7.83&2.19&3.74&\bf7.31&0.90&\bf5.44&4.24\\
\bf Accuracy (\%)&95.47&95.47&93.20&95.47&95.47&93.20&95.47&95.47&93.20\\
\bf CPU (\%)&77.26&4.40&52.37&79.92&5.90&56.82&74.29&1.66&53.45\\
\bf Memory (Mb)&213.42&29.93&113.96&226.37&108.73&133.23&210.12&106.98&99.98\\
\bf Time (s)&136.05&74.91&\bf45.08&161.04&94.29&\bf48.30&390.31&\bf64.90&83.25\\
\bf \branch{} Device (s)&136.05&0.56&29.00&161.04&0.83&34.16&390.31&0.55&60.36\\
\bf \leaf{} Edge (s)&0.00&74.35&16.07&0.00&93.46&14.13&0.00&64.35&22.89\\
\hline
\end{tabular}

\label{tab:cn_cifar_tflite}
\end{table*}


\begin{table*}[ht]
\centering
\setlength\tabcolsep{4pt}

\caption[Experimental Results with FVG (15 FPS) on Jetson TX2, Jetson Nano, and Raspberry Pi 4 - NCNN]{Experimental Results with FVG (15 FPS) on Jetson TX2, Jetson Nano, and Raspberry Pi 4 - NCNN}

\begin{tabular}{|l|rrr|rrr|rrr|}
\hline
&\multicolumn{3}{|c|}{\bf Jetson TX2} &\multicolumn{3}{|c|}{\bf Jetson Nano} &\multicolumn{3}{|c|}{\bf Raspberry Pi 4} \\
\hline
& \bf Device & \bf Edge & \bf CacheNet & \bf Device & \bf Edge & \bf CacheNet & \bf Device & \bf Edge & \bf CacheNet   \\ 
\hline
\bf FPS&11.36&10.40&\bf20.80&10.41&10.40&\bf22.70&5.10&11.36&\bf14.70\\
\bf Accuracy (\%)&97.20&97.20&98.40&97.20&97.20&98.40&97.20&97.20&98.40\\
\bf CPU (\%)&96.05&8.27&22.35&95.22&11.54&23.37&96.32&4.23&24.37\\
\bf Memory (Mb)&312.35&7.19&10.55&436.91&7.75&11.57&454.79&6.72&8.04\\
\bf Time (s)&22.01&24.05&\bf12.02&24.02&24.04&\bf11.01&49.04&22.01&\bf17.01\\
\bf \branch{} Device (s)&22.01&0.72&2.31&24.02&0.87&1.70&49.04&0.48&4.62\\
\bf \leaf{} Edge (s)&0.00&23.32&9.71&0.00&23.17&9.31&0.00&21.53&12.38\\
\hline
\end{tabular}

\label{tab:cn_fvg_fps15_ncnn}
\end{table*}

\begin{table*}[ht]
\centering
\setlength\tabcolsep{4pt}

\caption[Experimental Results with FVG (15 FPS) on Jetson TX2, Jetson Nano, and Raspberry Pi 4 - TensorFlow Lite]{Experimental Results with FVG (15 FPS) on Jetson TX2, Jetson Nano, and Raspberry Pi 4 - TensorFlow Lite}

\begin{tabular}{|l|rrr|rrr|rrr|}
\hline
&\multicolumn{3}{|c|}{\bf Jetson TX2} &\multicolumn{3}{|c|}{\bf Jetson Nano} &\multicolumn{3}{|c|}{\bf Raspberry Pi 4} \\
\hline
& \bf Device & \bf Edge & \bf CacheNet & \bf Device & \bf Edge & \bf CacheNet & \bf Device & \bf Edge & \bf CacheNet   \\ 
\hline
\bf FPS&7.65&10.72&\bf20.65&7.01&10.65&\bf21.47&3.75&10.71&\bf16.02\\
\bf Accuracy (\%)&97.20&97.20&98.40&97.20&97.20&98.40&97.20&97.20&98.40\\
\bf CPU (\%)&69.51&8.63&18.31&72.55&10.77&21.11&62.75&4.89&18.32\\
\bf Memory (Mb)&194.68&10.61&16.16&181.79&99.08&16.95&190.74&96.64&10.42\\
\bf Time (s)&32.70&23.33&\bf12.11&35.67&23.48&\bf11.64&66.65&23.33&\bf15.60\\
\bf \branch{} Device (s)&32.70&0.49&2.42&35.67&0.74&2.87&66.65&0.80&4.59\\
\bf \leaf{} Edge (s)&0.00&22.84&9.68&0.00&22.74&8.78&0.00&22.53&11.01\\
\hline
\end{tabular}

\label{tab:cn_fvg_fps15_tflite}
\end{table*}


\begin{table*}[ht]
\centering
\setlength\tabcolsep{4pt}

\caption[Experimental Results with FVG (30 FPS) on Jetson TX2, Jetson Nano, and Raspberry Pi 4 - NCNN]{Experimental Results with FVG (30 FPS) on Jetson TX2, Jetson Nano, and Raspberry Pi 4 - NCNN}

\begin{tabular}{|l|rrr|rrr|rrr|}
\hline
&\multicolumn{3}{|c|}{\bf Jetson TX2} &\multicolumn{3}{|c|}{\bf Jetson Nano} &\multicolumn{3}{|c|}{\bf Raspberry Pi 4} \\
\hline
& \bf Device & \bf Edge & \bf CacheNet & \bf Device & \bf Edge & \bf CacheNet & \bf Device & \bf Edge & \bf CacheNet   \\ 
\hline
\bf FPS&11.62&11.09&\bf17.84&10.86&11.88&\bf19.98&5.05&11.62&\bf13.89\\
\bf Accuracy (\%)&96.40&96.40&97.20&96.40&96.40&97.20&96.40&96.40&97.20\\
\bf CPU (\%)&96.94&8.46&22.78&96.91&11.84&24.05&98.10&4.43&25.30\\
\bf Memory (Mb)&310.16&12.76&9.00&455.06&12.79&11.52&454.23&11.48&9.35\\
\bf Time (s)&43.02&45.08&\bf28.03&46.03&42.07&\bf25.03&99.04&43.01&\bf36.01\\
\bf \branch{} Device (s)&43.02&0.87&3.77&46.03&0.65&3.21&99.04&0.22&8.52\\
\bf \leaf{} Edge (s)&0.00&44.21&24.26&0.00&41.42&21.82&0.00&42.79&27.49\\
\hline
\end{tabular}

\label{tab:cn_fvg_fps30_ncnn}
\end{table*}

\begin{table*}[ht]
\centering
\setlength\tabcolsep{4pt}

\caption[Experimental Results with FVG (30 FPS) on Jetson TX2, Jetson Nano, and Raspberry Pi 4 - TensorFlow Lite]{Experimental Results with FVG (30 FPS) on Jetson TX2, Jetson Nano, and Raspberry Pi 4 - TensorFlow Lite}

\begin{tabular}{|l|rrr|rrr|rrr|}
\hline
&\multicolumn{3}{|c|}{\bf Jetson TX2} &\multicolumn{3}{|c|}{\bf Jetson Nano} &\multicolumn{3}{|c|}{\bf Raspberry Pi 4} \\
\hline
& \bf Device & \bf Edge & \bf CacheNet & \bf Device & \bf Edge & \bf CacheNet & \bf Device & \bf Edge & \bf CacheNet   \\ 
\hline
\bf FPS&7.82&10.49&\bf17.76&7.15&11.08&\bf19.43&3.43&11.53&\bf13.49\\
\bf Accuracy (\%)&96.40&96.40&97.20&96.40&96.40&97.20&96.40&96.40&97.20\\
\bf CPU (\%)&69.98&8.60&19.89&73.13&11.52&21.24&63.68&4.52&17.17\\
\bf Memory (Mb)&197.03&10.34&16.64&189.10&99.05&16.09&191.69&6.79&11.23\\
\bf Time (s)&63.90&47.67&\bf28.15&69.89&45.13&\bf25.73&145.61&43.35&\bf37.06\\
\bf \branch{} Device (s)&63.90&0.69&4.97&69.89&0.99&5.57&145.61&0.19&9.29\\
\bf \leaf{} Edge (s)&0.00&46.98&23.18&0.00&44.14&20.16&0.00&43.16&27.78\\
\hline
\end{tabular}

\label{tab:cn_fvg_fps30_tflite}
\end{table*}


\section{Conclusion}
\label{sect:cn_conclusion}
In this paper, we proposed CacheNet, a neural network model caching mechanism for edge computing. In CacheNet, an edge (cloud) server is responsible for the storage and selection of neural network partitions, while an end device with a cached partition performs inferencing most of the time. 

Three key features enable CacheNet to achieve short end-to-end latency without much compromise in prediction accuracy: 1) Caching avoids the communication latency between an end device and edge (cloud) server whenever there is a cache hit; 2) specialized cached partitions do not sacrifice prediction accuracy if properly trained and selected; 3) the computation and storage complexities of cached model partitions are smaller rather than those of a full model. 

In future works, we plan to experiment with more datasets and neural network models using CacheNet. The two-level caching idea can be further extended to consider a hierarchy of caches, e.g., distributed among end devices, edge nodes and cloud servers. Another line of research is to apply neural architecture search to CacheNet to improve its adaptability to different types of neural networks.


%


\ifCLASSOPTIONcompsoc
  \section*{Acknowledgments}
\else
  \section*{Acknowledgment}
\fi

This work is in part supported by the Discovery Grant and Collaborative Research Development Grant from Natural Science and Engineering Council, Canada. 

The authors would like to thank McMaster Faculty of Engineering SummerTech Entrepreneur Fellowship for offering the financial support in purchasing experimental equipment including a Jetson Nano, Raspberry Pi 4, and a TP-LINK Archer C3200 router.

\appendices
\section{Absence of B\'{e}l\'{a}dy's Anomaly}
\label{sect:appendix}
B\'{e}l\'{a}dy's anomaly is the phenomenon that a larger cache incurs more cache misses than a smaller one. In CacheNet, there are two possible ways to take advantage of a larger cache size: 1) each individual submodel being cached has a larger capacity (i.e., deeper); 2) more submodels are being cached on an end device. If both do not result in fewer cache hits, we can conclude that B\'{e}l\'{a}dy's anomaly does not occur in CacheNet. 

\subsection{Larger Capacity}
A submodel with a larger capacity is defined as follows.
Given any sequence $X = x_1, x_2, \ldots , x_N$ of images, audio clips etc. Let $\Phi = \phi^{(1)}, \phi^{(2)}, \ldots, \phi^{(Q)}$ be an sequence of submodel instances for caching, with respect to 1) their depths $d^{(1)} < d^{(2)} < \ldots < d^{(Q)}$, 2) any layer in $\phi^{(1)}$ contained by $\phi^{(2)}$, $\ldots$, and any layer in $\phi^{(Q-1)}$ contained by $\phi^{(Q)}$. According to the capacity theorem \cite{cohen2016expressive}, submodel instance $\phi^{(1)}$ expresses less functions than $\phi^{(2)}$, $\ldots$, and $\phi^{(Q-1)}$ less functions than $\phi^{(Q)}$.

Let $H(\hat{y}|x_i, \phi^{(j)})$ be the predictive entropy given any input $x_i, i=1,2,\ldots,N$ and any submodel instance $\phi^{(j)}, j=1,2,\ldots,Q$. For a predefined threshold $T$, if $H(\hat{y}|x_i, \phi^{(j)}) < T$, we say it is a cache hit, else it is a cache miss. 

\begin{theorem}
Let $M(X, \phi^{(j)})$ be the number of misses (faults) given the input sequence $X$ and the submodel instance $\phi^{(j)}, j=1,2,\ldots,Q$. Then 
$M(X, \phi^{(1)}) \geq M(X, \phi^{(2)}) \geq \ldots \geq M(X, \phi^{(Q)})$

\end{theorem}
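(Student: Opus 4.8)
The plan is to reduce the stated chain of inequalities to a single pairwise monotonicity step and then iterate. First I would fix an arbitrary consecutive pair $\phi^{(j-1)}, \phi^{(j)}$ and prove that $M(X, \phi^{(j)}) \le M(X, \phi^{(j-1)})$; chaining this inequality over $j = 2, \ldots, Q$ then immediately yields the full ordering $M(X, \phi^{(1)}) \ge \cdots \ge M(X, \phi^{(Q)})$. This decomposition is convenient because it isolates the only nontrivial content into a comparison between two adjacent, nested instances.

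The core of the pairwise step is a \emph{per-input entropy comparison}: I claim that for every $x_i$ in the sequence, $H(\hat{y}|x_i, \phi^{(j)}) \le H(\hat{y}|x_i, \phi^{(j-1)})$. To justify this I would invoke the containment hypothesis of the theorem setup (every layer of $\phi^{(j-1)}$ is contained in $\phi^{(j)}$) together with the capacity theorem of \cite{cohen2016expressive}. Because $\phi^{(j)}$ contains $\phi^{(j-1)}$ as a sub-network, it can realize at least the same output distribution on each $x_i$ (in the worst case the additional depth merely propagates the smaller model's representation through, as is natural for the residual architectures used here), so the lowest entropy attainable by $\phi^{(j)}$ is no larger than that attainable by $\phi^{(j-1)}$. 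Under the convention that each instance is trained to the minimizer over its own function class, this delivers the claimed pointwise inequality.

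With the per-input inequality in hand, the counting argument is immediate. I would define the hit set $\mathcal{H}^{(j)} = \{\, i : H(\hat{y}|x_i, \phi^{(j)}) < T \,\}$ and observe that if $i \in \mathcal{H}^{(j-1)}$ then $H(\hat{y}|x_i, \phi^{(j)}) \le H(\hat{y}|x_i, \phi^{(j-1)}) < T$, so $i \in \mathcal{H}^{(j)}$; hence $\mathcal{H}^{(j-1)} \subseteq \mathcal{H}^{(j)}$. Taking complements within $\{1, \ldots, N\}$ shows that the miss set of $\phi^{(j)}$ is contained in that of $\phi^{(j-1)}$, and since $M(X, \phi^{(j)})$ is precisely the cardinality of that miss set, we obtain $M(X, \phi^{(j)}) \le M(X, \phi^{(j-1)})$.

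The step I expect to be the main obstacle is the per-input entropy comparison, because it bridges a statement about the \emph{expressiveness} of the function class (what $\phi^{(j)}$ \emph{can} represent) to a statement about the \emph{realized} entropy of the specific trained instance (what it \emph{does} output on $x_i$). The honest way to close this gap is to state explicitly that each $\phi^{(j)}$ attains the infimum entropy over its hypothesis class, or at minimum that training the deeper nested model never increases entropy relative to the smaller model it contains; without such an assumption a poorly trained large model could in principle violate the inequality and break the monotonicity. I would therefore foreground the containment hypothesis as exactly the device that lets the larger instance dominate the smaller one pointwise, and flag this training assumption as the one place where the argument requires care rather than routine calculation.
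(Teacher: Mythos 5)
Your proposal is correct to the same degree the paper's own proof is, and it rests on the same key lemma: because $\phi^{(j-1)}$ is nested inside $\phi^{(j)}$ and the extra layers can act as identities, a cache hit on the smaller instance must also be a hit on the larger one (via the capacity theorem of \cite{cohen2016expressive}). Where you differ is in the bookkeeping. The paper wraps this per-input implication in an induction on the length of the input sequence $X$ (base case $X = x_1$, then case analysis on whether $x_{i+1}$ hits or misses on $\phi^{(j)}$), whereas you observe that for this theorem a hit/miss is a \emph{stateless} per-input predicate --- $H(\hat{y}|x_i,\phi^{(j)}) < T$ or not --- so $M(X,\phi^{(j)})$ is just the cardinality of a miss set, and hit-set containment $\mathcal{H}^{(j-1)} \subseteq \mathcal{H}^{(j)}$ plus complementation finishes the count. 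Your route is cleaner: the induction in the paper buys nothing here (it matters only for the companion LRU theorem, where cache state does evolve with the sequence). The other notable difference is that you explicitly surface the weak point that the paper glosses over: its proof says the additional layers ``can be made as an identity \ldots thus the claim holds,'' silently converting a statement about what the larger trained instance \emph{could} express into one about what it \emph{does} output. Your insistence that one must assume the nested instances are trained so that the deeper model's realized entropy never exceeds the shallower one's is exactly the missing hypothesis; note, though, that even your proposed repair (each instance attains the infimum over its class) only yields domination of the aggregate training objective, not the per-input entropy inequality you use, so strictly speaking that gap survives in both your argument and the paper's --- you are simply honest about it.
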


\begin{proof}
We can prove this theorem by induction. 

1) Base case: if $X = x_1$, both $\phi^{(j)}$ and $\phi^{(j+1)}$ incurs a cache miss on $x_1$, thus, $M(X, \phi^{(j)}) = M(X, \phi^{(j+1)})$

2) Induction hypothesis: we need to show if $X = x_1, \ldots, x_i$, $M(X, \phi^{(j)}) \geq M(X, \phi^{(j+1)})$ for an arbitrary $j$, when $X = x_1, \ldots, x_{i+1}$, $M(X, \phi^{(j)}) \geq M(X, \phi^{(j+1)})$ also holds.

a) If the newly input $x_{i+1}$ incurs a cache hit on the submodel instance $\phi^{(j)}$, there should be also a cache hit on $\phi^{(j+1)}$. This claim relies on the capacity theorem \cite{cohen2016expressive} that the submodel instance $\phi^{(j+1)}$ has more functional expressibility than $\phi^{(j)}$. By definition, the submodel instance $\phi^{(j)}$ can be embedded in $\phi^{(j+1)}$. The submodel instance $\phi^{(j+1)}$'s additional layers can be made as an identity for $x_1, \ldots, x_{i+1}$'s intermediate outputs. Thus, the claim holds. 

b) If the new input $x_{i+1}$ incurs a cache miss on $\phi^{(j)}$, there may be a cache hit or cache miss on $\phi^{(j+1)}$. Since the submodel instance $\phi^{(j)}$ is embedded in $\phi^{(j+1)}$, and $\phi^{(j+1)}$'s additional layers are made as an identity for $x_1, \ldots, x_i$'s intermediate outputs. The additional layers of $\phi^{(j+1)}$ may have the additional capacity to represent $x_{i+1}$'s function. 

In either case, $M(X, \phi^{(j)}) \geq M(X, \phi^{(j+1)})$ for an arbitrary $j$. The induction hypothesis holds. 
\end{proof}


\subsection{More Submodels}

When there are multiple submodels to cache on an end device, a cache miss happens if the predictive entropy of the current submodel is less than the threshold $T$ and there is no suitable submodel (which is decided by the S-InfoVAE on the end device) currently stored on the end device. 

\begin{theorem}
Let $k$ ($1 \leq k \leq K$) be the number of submodels cached on an end device. 
Let $\bar{M}(X, k)$ be the number of misses (faults) given the input sequence $X$. Then, under the LRU cache replacement policy,  
$\bar{M}(X, 1) \geq \bar{M}(X, 2) \geq \dots \geq \bar{M}(X, K)$.
\end{theorem}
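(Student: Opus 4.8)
The plan is to reduce CacheNet's multi-submodel caching to the classical demand-paging problem and then invoke the fact that LRU is a \emph{stack algorithm}, which is the standard route to ruling out B\'{e}l\'{a}dy's anomaly.

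First I would fix a request sequence that does not depend on $k$. For each input $x_i$ the S-InfoVAE selector on the end device determines a unique suitable submodel index $s_i \in \{1,\dots,K\}$, and this choice depends only on $x_i$, not on what is currently cached; hence $s_1,\dots,s_N$ is the same for every cache size $k$. I would then argue that a cache miss at step $i$ occurs \emph{exactly} when $s_i$ is not resident. If $s_i$ is cached it simply becomes (or already is) the active submodel and the confidence test passes, giving a hit with no server round-trip; if $s_i$ is not cached then the active submodel $a_i$ cannot equal $s_i$ (since $a_i$ is by definition resident), so the confidence condition fails and a fetch is triggered. This collapses the compound condition ``$a_i \ne s_i$ and $s_i \notin \text{cache}$'' to the single condition $s_i \notin \text{cache}$, which is precisely the miss rule of demand paging. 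Thus $\bar M(X,k)$ equals the number of faults produced by LRU on the stream $s_1,\dots,s_N$ with a cache of $k$ pages.

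Next I would establish the \emph{inclusion (stack) property} of LRU. Writing $C_k(i)$ for the set of submodels resident after processing $s_1,\dots,s_i$ under LRU with capacity $k$, the claim is $C_k(i)\subseteq C_{k+1}(i)$ for all $i,k$. The quickest route is the recency characterization: under LRU, $C_k(i)$ is exactly the set of the (at most) $k$ most-recently-requested distinct indices in the prefix $s_1,\dots,s_i$, and the top-$k$ by recency is trivially contained in the top-$(k+1)$. If a more self-contained argument is preferred, I would prove inclusion by induction on $i$ (mirroring the induction used for the previous theorem), checking the cases $s_{i+1}\in C_k(i)$, $s_{i+1}\in C_{k+1}(i)\setminus C_k(i)$, and $s_{i+1}\notin C_{k+1}(i)$. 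The only delicate case is the last (a miss that evicts from both caches), which rests on the key fact that the least-recently-used resident of the size-$(k+1)$ cache, whenever it also lies in the size-$k$ cache, is necessarily the least-recently-used resident there too, since being least-recently-used in a superset forces the same within any subset containing it.

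Finally I would derive monotonicity from inclusion. A fault at step $i$ in the size-$(k+1)$ cache means $s_i\notin C_{k+1}(i-1)$; by $C_k(i-1)\subseteq C_{k+1}(i-1)$ this forces $s_i\notin C_k(i-1)$, i.e.\ a fault in the size-$k$ cache as well. Hence the set of fault steps for capacity $k+1$ is contained in that for capacity $k$, giving $\bar M(X,k)\ge \bar M(X,k+1)$, and chaining over $k=1,\dots,K-1$ yields the stated chain. I expect the main obstacle to be not the stack-property induction, which is classical, but the justification of the reduction in the first step: verifying that the selector's output is genuinely independent of the cache contents and that the entropy/confidence test re-introduces no $k$-dependence into the request stream. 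Pinning down these modeling assumptions is precisely what legitimizes treating CacheNet exactly as LRU demand paging.
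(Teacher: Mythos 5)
Your proposal is correct and follows essentially the same route as the paper's proof: both reduce the claim to the inclusion (stack) property of LRU --- the contents of the size-$k$ cache are at all times a subset of the contents of the size-$(k+1)$ cache --- and conclude that every fault incurred by the larger cache is also a fault of the smaller one. The only difference is rigor: the paper merely asserts the embedding (``the $k$ cached submodels are always embedded in the $k+1$ submodels under the LRU policy'') inside an informal induction over the input sequence, whereas you actually prove it via the recency characterization and, moreover, justify the reduction to demand paging (that the request stream $s_1,\dots,s_N$ and the steps at which requests occur do not depend on $k$), a modeling point the paper's proof glosses over entirely.
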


\begin{proof}
We can prove this theorem by induction. 

1) Base case: if $X = x_1$, both $k$ and $k+1$ cached submodels incur a cache miss on $x_1$, thus, $\bar{M}(X, k) = \bar{M}(X, k+1)$

2) Induction hypothesis: we need to show if $X = x_1, \ldots, x_i$, $\bar{M}(X, k) \geq \bar{M}(X, k+1)$ for an arbitrary $k$, when $X = x_1, \ldots, x_{i+1}$, $\bar{M}(X, k) \geq \bar{M}(X, k+1)$ also holds.

a) If the newly input $x_{i+1}$ incurs a cache hit on $k$ cached submodels, there should be also a cache hit on $k+1$ cached submodels, because the $k$ cached submodels are always embedded in the $k+1$ submodels under the least recently used (LRU) policy. 

b) If the newly input $x_{i+1}$ incurs a cache miss on $k$ cached submodels, there may be a cache hit or cache miss on $k+1$ cached submodels, because the $k$ submodels are embedded in the $k+1$ submodels, the one more submodel in the cached $k+1$ submodels may cause the hit or not depending on whether it matches the index given by S-InfoVAE.  

No matter in either case, $\bar{M}(X, k) \geq \bar{M}(X, k+1)$ for an arbitrary $k$. The induction hypothesis holds. Thus, the theorem holds. 
\end{proof}

To this end, we conclude when individual submodels have larger capacity or more submodels can be cached on an end device, CacheNet always has higher or the same hit rates. In other words, it does not suffer from  B\'{e}l\'{a}dy's anomaly.

\ifCLASSOPTIONcaptionsoff
  \newpage
\fi



%

\bibliographystyle{plain}
\bibliography{references}

%

\begin{IEEEbiography}[{\includegraphics[width=1in,height=1.25in,clip,keepaspectratio]{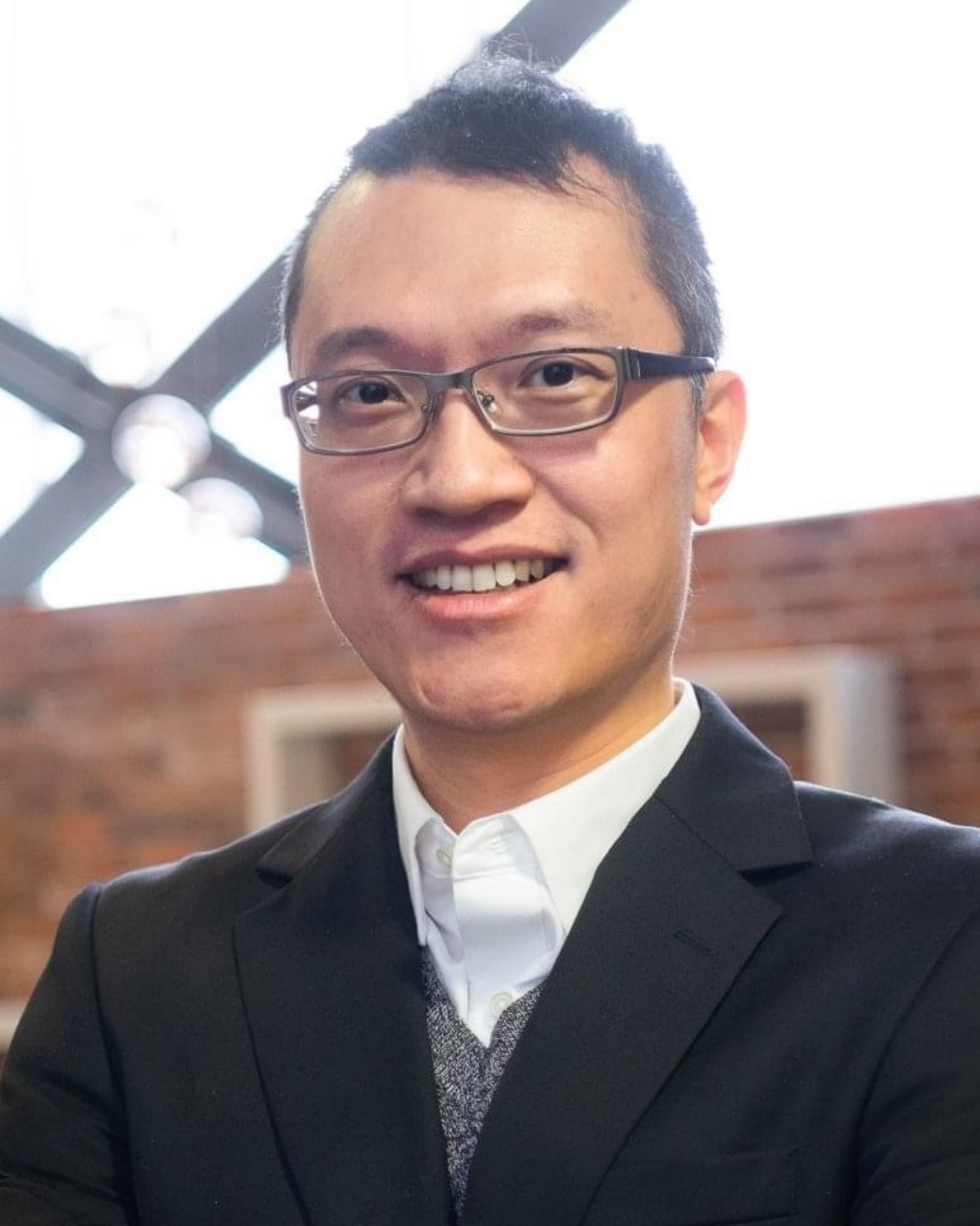}}]{Yihao Fang}
received his M.Eng. from the Dept. of Computing and Software, McMaster University and his B.Eng. from Tongji University, P.R. China. He is currently working toward his Ph.D. in the Dept. of Computing and Software, McMaster University. In the past, Yihao Fang has worked with leading companies in the industry such as Hewlett-Packard and Oracle. He is an IEEE student member and reviewer, and the CEO and founder of an Artificial Intelligence startup company: Lexivalley Inc. 
\end{IEEEbiography}

\begin{IEEEbiography}[{\includegraphics[width=1in,height=1.25in,clip,keepaspectratio]{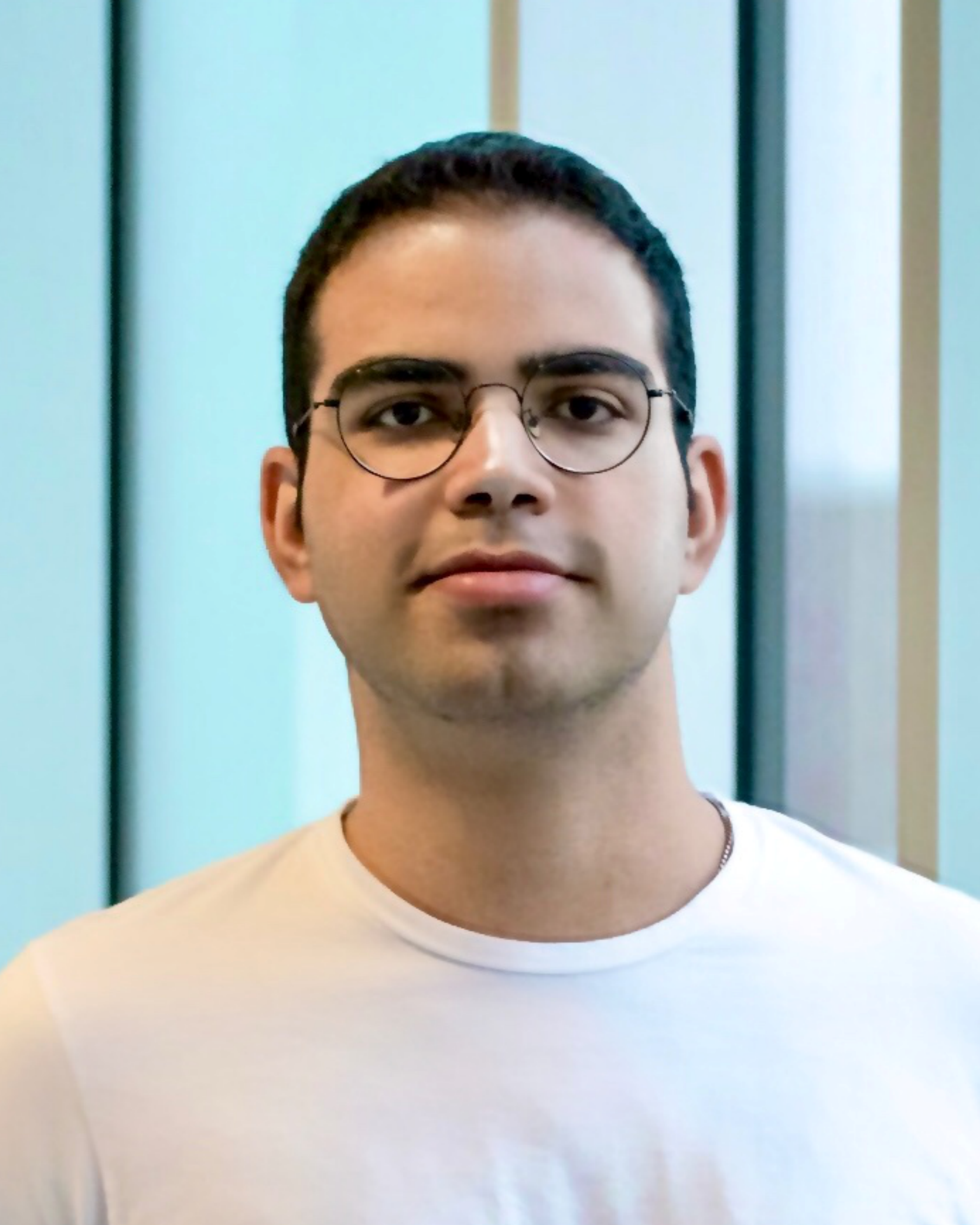}}]{Shervin Manzuri Shalmani}
received his B.Sc. in Computer Engineering from Sharif University of Technology, Tehran, Iran, in 2018. He is now pursuing his M.Sc. in Computer Science at McMaster University. His research areas of interest include deep learning, video processing and computer vision.
\end{IEEEbiography}


\begin{IEEEbiography}[{\includegraphics[width=1in,height=1.25in,clip,keepaspectratio]{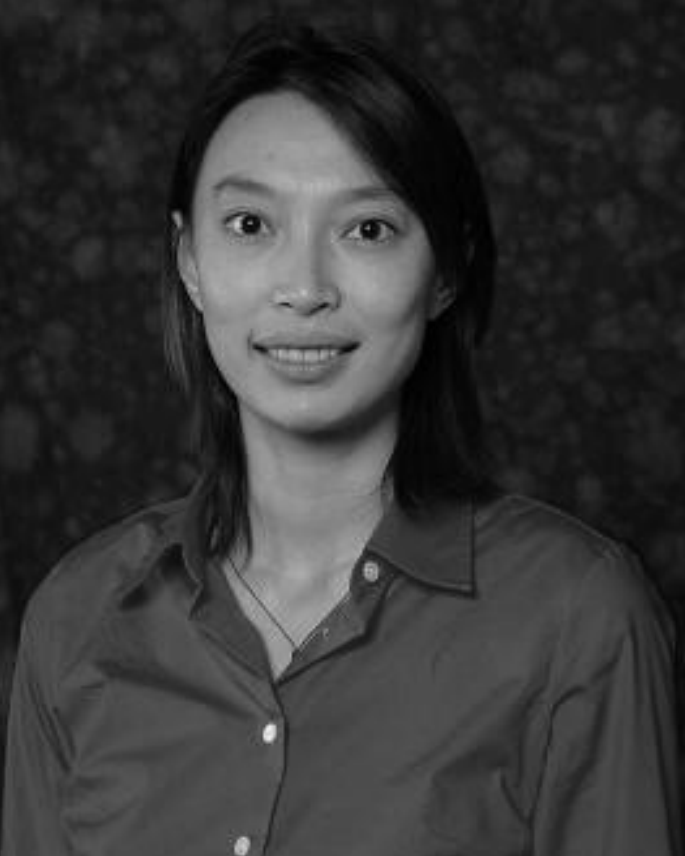}}]{Rong Zheng}
is a Professor in the Dept. of Computing and Software, McMaster University. She is an expert in wireless networking, mobile computing and mobile data analytics. She received the National Science Foundation CAREER Award in 2006, and was a Joseph Ip Distinguished Engineering Fellow from 2015 - 2018. Dr. Zheng is currently an editor of IEEE Transactions on Mobile Computing.
\end{IEEEbiography}


\vfill


\end{document}